\definecolor{cvprblue}{rgb}{0.21,0.49,0.74}
\title{BiM-VFI: Bidirectional Motion Field-Guided Frame Interpolation for Video with Non-uniform Motions} 
\author{Wonyong Seo\\
KAIST\footnotemark[1]\\
{\tt\small wyong0122@kaist.ac.kr}
\and
Jihyong Oh\footnotemark[2]\\
Chung-Ang University\\
{\tt\small jihyongoh@cau.ac.kr}
\and
Munchurl Kim\footnotemark[2]\\
KAIST\footnotemark[1]\\
{\tt\small mkimee@kaist.ac.kr}
\and
\small{\url{https://kaist-viclab.github.io/BiM-VFI_site/}}
}
\begin{document}
\twocolumn[{
    \renewcommand
    \twocolumn[1][]{#1}%
    
    \maketitle
    
    \centering
    
       \includegraphics[width=0.92\textwidth]{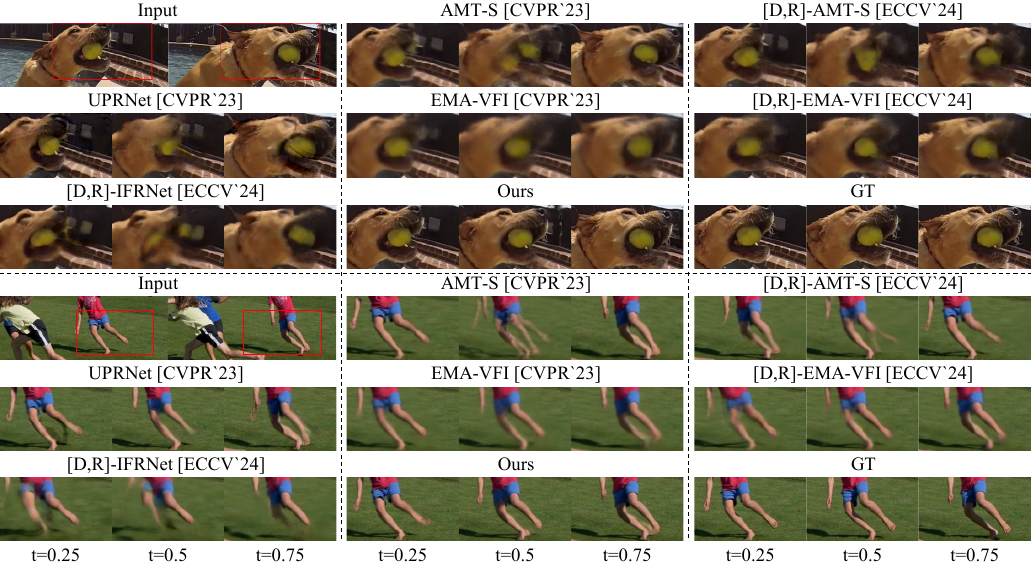}
       \vspace{-0.1cm}
       \captionof{figure}{Qualitative comparison of our proposed BiM-VFI and SOTA models at arbitrary time instances (\(t\) = 0.25, 0.5 and 0.75) for video frame interpolation. The previous SOTA methods yield blurry interpolated frames while our BiM-VFI model generates clear ones.}
       \label{fig:qual_arb}
    
    \vspace{0.5cm}
}]
{
  \renewcommand{\thefootnote}%
    {\fnsymbol{footnote}}
  \footnotetext[1]{Korea Advanced Institute of Science and Technology.}
  \footnotetext[2]{Co-corresponding authors.}
}
\begin{abstract}
Existing Video Frame interpolation (VFI) models tend to suffer from time-to-location ambiguity when trained with video of non-uniform motions, such as accelerating, decelerating, and changing directions, which often yield blurred interpolated frames.
In this paper, we propose (i) a novel motion description map, Bidirectional Motion field (BiM), to effectively describe non-uniform motions;  (ii) a BiM-guided Flow Net (BiMFN)  with Content-Aware Upsampling Network (CAUN) for precise optical flow estimation; and (iii) Knowledge Distillation for VFI-centric Flow supervision (KDVCF) to supervise the motion estimation of VFI model with VFI-centric teacher flows.
The proposed VFI is called a Bidirectional Motion field-guided VFI (BiM-VFI) model.
Extensive experiments show that our BiM-VFI model significantly surpasses the recent state-of-the-art VFI methods by 26\% and 45\% improvements in LPIPS and STLPIPS respectively, yielding interpolated frames with much fewer blurs at arbitrary time instances. 

\end{abstract}
    
\section{Introduction}
\label{sec:introduction}

\begin{figure*}[t]
    \centering
    \includegraphics[width=\textwidth]{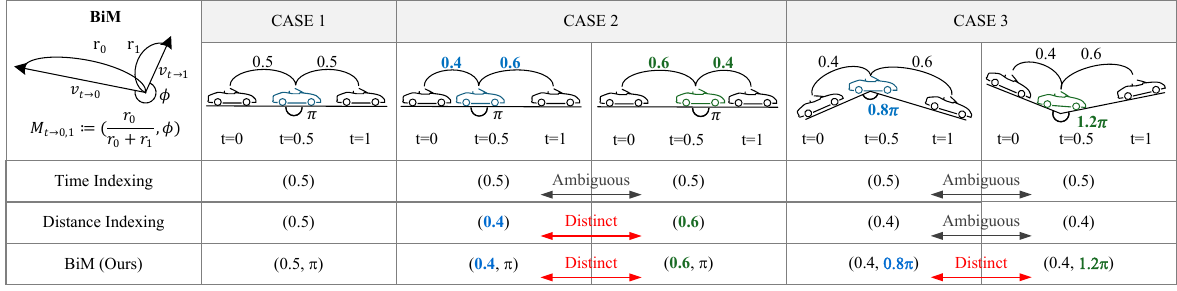}
    \caption{Time-to-location (TTL) ambiguity comparison of motion descriptors (time indexing, distance indexing, our BiM).}
    \label{fig:motion_field}
\end{figure*}

Video frame interpolation (VFI) is a fundamental low-level vision task that aims to synthesize intermediate frames between temporally adjacent input frames.
VFI enables the conversion of low-frame-rate videos into high-frame-rate sequences, enhancing visual fluidity and realism.
VFI has broad applications, including the restoration and enhancement of low frame rate videos, the creation of slow-motion videos~\cite{jiang2018super, xue2019video}, and the improvement of animation workflows in the cartoon industry~\cite{siyao2021deep, plack2023frame, chen2022improving}.

VFI is an ill-posed problem due to the time-to-location (TTL) ambiguity between two input frames~\cite{zhong2023clearer, lu2022video,shi2022video}.
The TTL ambiguity stems from infinitely many possible trajectories between the corresponding pixels of the two source input frames for video sequences with non-uniform motions (CASE1, CASE2, and CASE3 in \cref{fig:motion_field}).
It is well known that TTL ambiguity complicates the prediction of the actual target frame during inference.
However, it also can pose challenges during training.
VFI learning based only on target timesteps \(t\) between the two source input frames can cause VFI networks to learn the average of all the possibilities as final interpolation results which often turn out to be very blurred~\cite{zhong2023clearer}.
Solving the TTL ambiguity problem is challenging, especially for inference because of its high ill-posedness.
Instead, we propose an alternative solution not to solve the TTL ambiguity problem at inference time but to resolve ambiguity in the training phase to obtain clean interpolated frames at target time instances.


To resolve TTL ambiguity during training, we propose a novel motion description map based on Bidirectional-Motion Fields (BiM), inspired by the distance indexing~\cite{zhong2023clearer}.
The distance indexing relies only on relative distances on the line between two corresponding pixels in source frames, so it is limited in describing the directional changes along motion trajectories, and thus cannot fully resolve the ambiguity.
However, our BiM can fully describe any non-uniform motions including accelerations, decelerations, or directional changes by incorporating both magnitude and angular information of bidirectional flows between a target frame and each of two source frames.
The BiM is used as a description map for VFI learning to generate clean interpolated frames by limiting the solution space of the possible motion trajectories during training.
Also, we design (i) a BiM-guided FlowNet (BiMFN), and (ii) a Content-Aware Upsample Network (CAUN) to precisely estimate the motions based on input BiM.
Lastly, we propose a Knowledge Distillation for VFI-Centric Flow supervision (KDVCF) as a new training strategy with the help of the target frame to generate both BiM as input to student process and accurate flows for student process supervision during the training.
Our proposed VFI model with KDVCF, BiMFN, and CAUN is called Bidirectional Motion field-guided VFI (BiM-VFI).

\section{Related Works}
\subsection{Video frame interpolation}
\label{subsec:vfi}
VFI methods can be divided into two categories: flow-based and kernel-based approaches.
The flow-based methods~\cite{jin2023unified, li2023amt, zhang2023extracting, huang2022real, kong2022ifrnet} utilize optical flows in interpolating a target frame.
The kernel-based methods construct various types of kernels, such as the adaptive kernels~\cite{niklaus2017ada, niklaus2017sep}, the deformable kernels~\cite{lee2020adacof, xiang2020zooming}, or the attention maps~\cite{shi2022video, lu2022video} to interpolate the target frames by applying these kernels to the source frames.
While flow-based methods can interpolate at any arbitrary time frame, kernel-based methods are limited to interpolating center frames.
Consequently, flow-based methods have dominated the recent VFI works.

Flow-based methods focus on improving the performance of their motion estimators to enhance the interpolation quality.
For the methods~\cite{niklaus2020softmax, hu2022many} employing forward warping~\cite{niklaus2020softmax}, pre-trained optical flow models~\cite{sun2018pwc, huang2022flowformer, teed2020raft} have been directly utilized to estimate the motion to improve the motion estimation accuracy.
For the methods employing backward warping~\cite{jaderberg2015spatial}, flows have to be estimated from the target frame to each of the two source frames, while target frames are unavailable.
Therefore, recent methods tried to design their own motion estimators for accurate flow estimations.
Park \etal ~\cite{park2020bmbc, park2021asymmetric, park2023biformer} and Jin \etal~\cite{jin2023unified} have tailored local cost volumes in a bilateral manner to estimate motions between target frame and each of two source frame. Li \etal~\cite{li2023amt} also adopted `all pair cost volume'~\cite{teed2020raft}, to enhance the motion estimation capabilities of their model.

Recent studies~\cite{kong2022ifrnet, li2023amt} have demonstrated that supervising the flow estimation using pre-trained optical flow models~\cite{hui2018liteflownet} can benefit motion estimation learning, especially in large motions or motions in homogeneous regions, which are not adequately captured by photometric supervision.
However, the pre-trained flow models resulted in degraded performance for VFI in cases of motion in certain regions such as shadows, or blurs, because flows estimated from supervised optical flow models and flows for VFI have distinct roles~\cite{kong2022ifrnet}.
Huang \etal~\cite{huang2022real} introduced the ``privileged block'', which utilizes the target frame to generate a more accurate optical flow from the target to the source frame.
They supervised the flows estimated solely from source frames with these privileged flows to enhance motion estimation performance.
However, the privileged block only consists of a few convolution layers, thus limiting the full utilization of target frames to enhance motion estimation accuracy.

\subsection{Non-uniform motions for VFI}
There are studies focused on reducing ambiguity caused by non-uniform motions in VFI.
Xu \etal~\cite{xu2019quadratic} utilized four neighboring consecutive input source frames around each target frame to model motion as a quadratic equation.
Several studies~\cite{lu2022video, shi2022video} also use four neighboring frames with transformer architectures~\cite{dosovitskiy2021an, liu2021swin} to implicitly capture non-uniform motion and interpolate the target frames with self-attention operation.
However, while VFI methods using 4 input frames can reduce TTL ambiguity during training, they still cannot fully resolve it.

Recently, Zhong \etal~\cite{zhong2023clearer} proposed a novel paradigm to address TTL ambiguity during training.
Zhong \etal~\cite{zhong2023clearer} introduced a novel motion descriptor called `distance indexing', which describes the relative magnitude between the motion from \(I_{0}\) to \(I_{1}\) and the motion from \(I_{0}\) to \(I_{t}\), using a pixel-wise motion magnitude ratio map.
It has shown that distance indexing can effectively resolve velocity ambiguity but cannot resolve directional ambiguity, because it only includes the motion magnitudes and ignores the directional components.

\begin{figure*}[t]
    \centering
    \includegraphics[width=\textwidth]{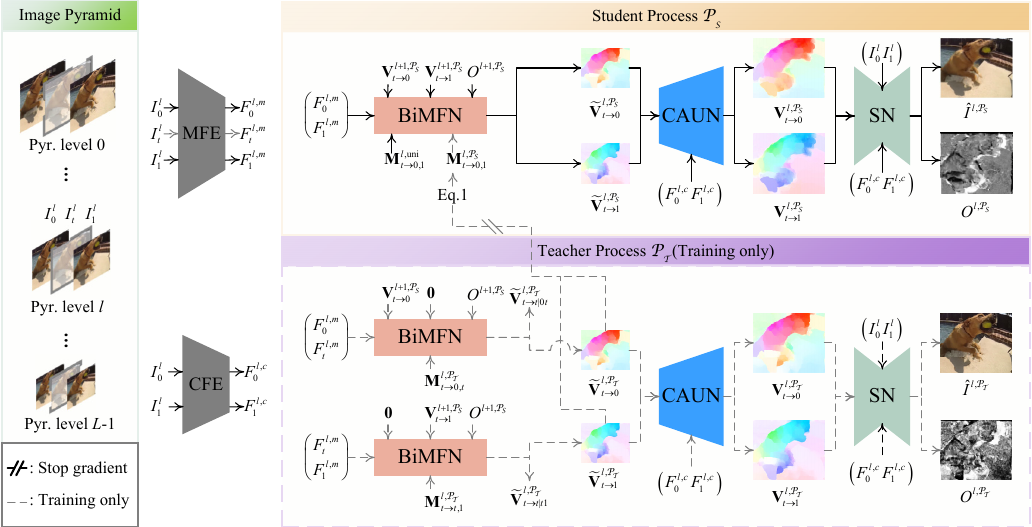}
    \caption{{Our Bidirectional Motion Field-guided VFI (BiM-VFI) with Knowledge Distillation for VFI-Centric Flow supervision (KDVCF).}}
    \label{fig:main}
\end{figure*} 
\section{Proposed Method}

\cref{fig:main} depicts the overall network architecture of our BiM-VFI based on a recurrent pyramid architecture, operates from \((L-1)\)-th level to \(0\)-th level (where \(L\) is a number pyramid level), and the procedure of proposed KDVCF.
Every pyramid level consists of a pair of student and teacher processes, \(\mathcal{P}_{\mathcal{S}}\) and \(\mathcal{P}_{\mathcal{T}}\), where the weights are shared between the processes as well as across all pyramid levels.
A preceding source frame, a following source frame, and a target frame are denoted as \(I_0\), \(I_1\), and \(I_t\), respectively.

In the BiM-VFI, \(I_0\), \(I_t\), and \(I_1\) are downsampled by a factor \(1/2\) at each hierarchical level.
The Motion Feature Extractor (MFE) extracts motion features \(F_{0}^{l,m}\), \(F_{1}^{l,m}\) and \(F_{t}^{l,m}\) for \(l\)-th level input, while the Context Feature Extractor (CFE) extracts context features \(F_{0}^{l,c}\) and \(F_{1}^{l,c}\).
In the student process, \(F_{0}^{l,m}\) and \(F_{1}^{l,m}\) are fed in Bidirectional Motion Field FlowNet (BiMFN) that outputs bidirectional optical flows, \(\tilde{\textbf{V}}_{t\rightarrow 0}^{l, \mathcal{P}_{\mathcal{S}}}\) and \(\tilde{\textbf{V}}_{t\rightarrow 1}^{l, \mathcal{P}_{\mathcal{S}}}\).
The Content-Aware Upsampling Network (CAUN) takes \(\tilde{\textbf{V}}_{t\rightarrow 0}^{l, \mathcal{P}_{\mathcal{S}}}\), \(\tilde{\textbf{V}}_{t\rightarrow 1}^{l, \mathcal{P}_{\mathcal{S}}}\), \(F_{0}^{l,c}\), and \(F_{1}^{l,c}\) as input and yields upsampled optical flows \({\textbf{V}}_{t\rightarrow 0}^{l, \mathcal{P}_{\mathcal{S}}}\) and \({\textbf{V}}_{t\rightarrow 1}^{l, \mathcal{P}_{\mathcal{S}}}\) in a adaptive manner.
In the synthesis network (SN), \(I_{0}^{l}\), \(I_{1}^{l}\), \(F_{0}^{l,c}\), and \(F_{1}^{l,c}\) are backwarped~\cite{jaderberg2015spatial} by \({\textbf{V}}_{t\rightarrow 0}^{l, \mathcal{P}_{\mathcal{S}}}\) and \({\textbf{V}}_{t\rightarrow 1}^{l, \mathcal{P}_{\mathcal{S}}}\).
The warped frames and context features then finally yield a blending mask for two warped images, \(O^{l,\mathcal{P}_{\mathcal{S}}}\) and corresponding interpolated frame \(\hat{I}_t^{l,\mathcal{P}_{\mathcal{S}}}\).
Simple U-net~\cite{ronneberger2015u} architecture is employed as SN for our BiM-VFI.

The teacher process operates almost in the same manner as the student process except using ground truth target frame \(I_t^l\) as input pair with each of \(I_0^l\) and \(I_1^l\). Since \(I_t^l\) is used as part of input, the resulting \(\tilde{\textbf{V}}_{t\rightarrow0}^{l,\mathcal{P}_{\mathcal{T}}}\) and \(\tilde{\textbf{V}}_{t\rightarrow1}^{l,\mathcal{P}_{\mathcal{T}}}\) are more precise than \(\tilde{\textbf{V}}_{t\rightarrow0}^{l,\mathcal{P}_{\mathcal{S}}}\) and \(\tilde{\textbf{V}}_{t\rightarrow1}^{l,\mathcal{P}_{\mathcal{S}}}\) so they are used to supervise the learning of \(\tilde{\textbf{V}}_{t\rightarrow0}^{l,\mathcal{P}_{\mathcal{S}}}\) and \(\tilde{\textbf{V}}_{t\rightarrow1}^{l,\mathcal{P}_{\mathcal{S}}}\), as well as to compute the BiM for student process \(M_{t\rightarrow 0,1}^{l,\mathcal{P}_{\mathcal{S}}}\).

In the rest of this section, we will explain our motion description map BiM (\cref{subsec:motion field}), specific modules in our BiM-VFI (\cref{subsec:MFFN,subsec:caun}), and the proposed knowledge distillation strategy KDVCF (\cref{subsec:KDVCF}) in detail.
\subsection{Bidirectional Motion Field (BiM)}
\label{subsec:motion field}
Various non-uniform motions including accelerating, decelerating, and changing directions are contained in real-world videos.
Such non-uniform motions not only make ill-posedness at inference but also cause VFI learning to suffer from the TTL ambiguity at training time, thus resulting in severe blur artifacts in interpolated frames.
It is very challenging to resolve such ambiguity problems directly in inference time because it is difficult to predict the actual trajectory when only the first and last frames of the motion are given.
Instead, we take an alternative approach to solving the blur problem caused by TTL ambiguity at training time.

To resolve the TTL ambiguity during training, we introduce a Bidirectional Motion Field (BiM),  \(\textbf{M}_{t\rightarrow0,1}=[R,\Phi]^T\), as a novel motion descriptor that consists of pixel-wise motion magnitude ratios  \(R\) and angles \(\Phi\) between bidirectional optical flows \(\textbf{V}_{t\rightarrow 0}\) and \(\textbf{V}_{t\rightarrow 1}\) from \(I_t\) to each of \(I_0\) and \(I_1\).
The BiM at pixel location \((x,y)\) is defined as: 
\begin{equation}
\label{eq:BiM}
    \textbf{M}_{t\rightarrow0,1}(x,y) = [R(x,y), \Phi(x,y)]^T=[\frac{r_0}{r_0 + r_1}, \phi ]^T,
\end{equation}
where \(r_0 = ||\textbf{V}_{t\rightarrow0}(x,y)||\), \(r_1 = ||\textbf{V}_{t\rightarrow1}(x,y)||\), and \(\phi = \angle\textbf{V}_{t\rightarrow1}(x,y)-\angle\textbf{V}_{t\rightarrow0}(x,y)\) (top left of \cref{fig:motion_field}). 

\cref{fig:motion_field} depicts a TTL ambiguity comparison of different motion descriptors, time indexing~\cite{huang2022real,li2023amt,kong2022ifrnet,zhou2023exploring}, distance indexing~\cite{zhong2023clearer} and our BiM.
CASE 1 illustrates a car at timestep \(t=0.5\) is exactly at the center between the cars at timestep \(t=0\) and \(t=1\).
All motion descriptors can avoid TTL ambiguity if all training triplets have uniform motions as depicted in CASE1.
In CASE 2, the blue car is placed at a relative distance of 0.4, while the green car is placed at a relative distance of 0.6.
In this case, the blue and green cars are described by the same time indexing of 0.5 but different distance indexings and BiMs of 0.4 and 0.6, which incurs the TTL ambiguity only with time indexing.
That is, time indexing cannot distinguish the blue and green car's position, where both cars are captured at \(t = 0.5\).
Lastly, CASE 3 shows the case where the blue and green cars have different changes in motion directions at an accelerating speed.
Both time and distance indexings fail to distinguish the two cars, but our BiM can describe the two cars differently in terms of \(\phi = 1.2\pi\) and \(0.8\pi\).
Due to this TTL ambiguity problem, recent VFI models trained with time indexing~\cite{huang2022real,li2023amt,kong2022ifrnet,zhou2023exploring} or distance indexing~\cite{zhong2023clearer} tend to produce blurry interpolated frames at target times in the sense of averaging all possible answers (blue and green cars) to minimize the objectives such as L1 losses.

For inference, \(I_t\) has to be restored, but the flows \(\textbf{V}_{t\rightarrow 0}\), \(\textbf{V}_{t\rightarrow 1}\), or even the BiM \(\textbf{M}_{t\rightarrow0,1}\) is not available except for the target time \(t\).
Moreover, the motion types (uniform or non-uniform) are not known between \(I_0\) and \(I_1\).
Nevertheless, it is known that uniform motion assumption reasonably works well~\cite{zhong2023clearer}.
We extend this uniform assumption, \(\textbf{V}_{t\rightarrow 0} / t + \textbf{V}_{t\rightarrow 1} / (1-t) = 0\). by adding angle information \(\phi=\pi\).
So, the uniform BiM used for inference is given as: 
\begin{equation}
\label{eq:unifrom_bim}
    \textbf{M}_{t\rightarrow{0,1}}^\text{uni}=\begin{bmatrix}t \cdot \textbf{1}_{H\times W} & \pi \cdot \textbf{1}_{H\times W}
    \end{bmatrix}^T,
\end{equation}
where \(H\) and \(W\) are the height and width of desired bidirectional optical flows, \(\textbf{V}_{t\rightarrow0}\) and \(\textbf{V}_{t\rightarrow1}\).
We demonstrate that our BiM, under the uniform motion assumption, can interpolate frames with a similar sense of time indexing as described in \cref{subsec:uniform_motion_assumption}.

We point out that our BiM guides the BiM-VFI model to yield relatively cleaner interpolated frames at \(t\) than other VFI models with time indexing and distance indexing (\cref{fig:qual_arb}).
With the distinct motion describability of BiM, our BiM-VFI is trained without TTL ambiguity and can infer much cleaner target frames under the uniform motion assumption, although the motion of the interpolated target frames may not aligned with their real motions.

\subsection{BiM-guided FlowNet (BiMFN)}
\label{subsec:MFFN}
\begin{figure}[ht]
    \centering
    \includegraphics[width=0.9\columnwidth]{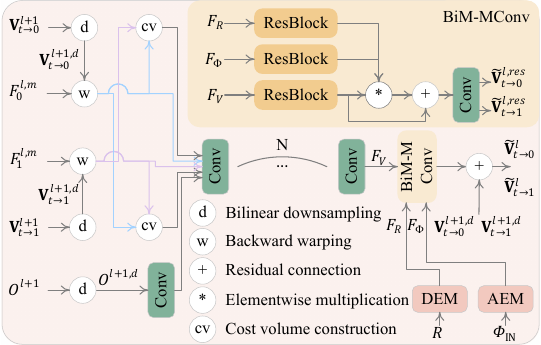}
    \caption{{Proposed BiM-Guided FlowNet (BiMFN) at \textit{l}-th pyramid level.}}
    \label{fig:mffn}
\end{figure}
We now design a bidirectional flow estimation network that utilizes the BiM. \cref{fig:mffn} shows our BiM-guided FlowNet (BiMFN) that estimates \(\tilde{\textbf{V}}_{t\rightarrow 0}^l\) and \(\tilde{\textbf{V}}_{t\rightarrow 1}^l\) from \(I_t^l\) to each of \(I_0^l\) and \(I_1^l\) at \(l\)-th pyramid level.
It is noted that \({\textbf{V}}_{t\rightarrow 0}^{l+1,d}\) and \({\textbf{V}}_{t\rightarrow 1}^{l+1,d}\) are bilinearly downsampled versions of \({\textbf{V}}_{t\rightarrow 0}^{l+1}\) and \({\textbf{V}}_{t\rightarrow 1}^{l+1}\) by a factor of 2 and its magnitude is also divided by factor of 2 to match the spatial sizes and magnitude of \(\tilde{\textbf{V}}^{l}_{t\rightarrow0}\) and \(\tilde{\textbf{V}}^{l}_{t\rightarrow1}\).
The blending mask estimated from the previous pyramid level, \(O^{l+1}\), is also downsampled as \(O^{l+1,d}\) in the same sense.

The BiMFN first warps \(F^{l,m}_{0}\) and \(F^{l,m}_{1}\) using \(\textbf{V}_{t\rightarrow0}^{l+1}\) and \(\textbf{V}_{t\rightarrow1}^{l+1}\), resulting in \(F^{l,m}_{0\rightarrow t}\) and \(F^{l,m}_{1 \rightarrow t}\).
\(F^{l,m}_{0\rightarrow t}\) and \(F^{l,m}_{1 \rightarrow t}\) are then used to construct local cost volumes~\cite{sun2018pwc, teed2020raft} to find precise correspondences between two features.
Local cost volumes from \(F^{l,m}_{0\rightarrow t}\) to \(F^{l,m}_{1 \rightarrow t}\) and vice versa are constructed to encapsulate asymmetric correspondence information.
\(O^{l+1,d}\) is encoded through separate convolution layers and then concatenated with \(F^{l,m}_{0\rightarrow t}\), \(F^{l,m}_{1 \rightarrow t}\), and two cost volumes before being passed to the next module.

Then, the BiM Modulation Convolution (BiM-MConv) in \cref{fig:mffn} takes three inputs with \(F_V\), \(F_R\), and \(F_\Phi\) where (i) \(F_V\) is the output of the cascaded eight convolution layers; (ii) \(F_R\) is the encoded output from the Distance Embedding Module (DEM) with a one-channel motion ratio component input \(R\) in \cref{eq:BiM}; and (iii) \(F_\Phi\) is the feature output of the Angle Embedding Module (AEM) for a two-channel angular component input \(\Phi_\text{IN}=(\sin(\Phi), \cos(\Phi))\) from \cref{eq:BiM}.
Finally, for \(F_R\), \(F_\Phi\), and \(F_V\) input, the BiM-MConv integrates them with elementwise multiplication and produces the refined residual flow fields, \(\tilde{\textbf{V}}_{t\rightarrow0}^{l,res}\) and \(\tilde{\textbf{V}}_{t\rightarrow1}^{l,res}\).
The final flow estimations at \(l\)-th pyramid level are computed as:
\begin{equation}
    \begin{split}
        &\tilde{\textbf{V}}_{t\rightarrow0}^{l} = \textbf{V}_{t\rightarrow0}^{l+1,d} + \tilde{\textbf{V}}_{t\rightarrow0}^{l,res},\\
        &\tilde{\textbf{V}}_{t\rightarrow1}^{l} = \textbf{V}_{t\rightarrow1}^{l+1,d} + \tilde{\textbf{V}}_{t\rightarrow1}^{l,res}.
    \end{split}
\end{equation}

\subsection{Content-Aware Upsampling Network (CAUN)}
\label{subsec:caun}
Since \(\tilde{\textbf{V}}_{t\rightarrow0}^{l}\) and \(\tilde{\textbf{V}}_{t\rightarrow1}^{l}\) are of the same size as \(F^{l}_{0,m}\) and \(F^{l}_{1,m}\), they must be upsampled by a scale factor of 4 to match the image dimensions, \(H/2^l\) and \(W/2^l\).
In general, the usage of bilinear or bicubic upsampling can incur flow leakages along object boundaries and diminish small objects by blending external flows~\cite{teed2020raft, luo2021upflow}.
To avoid this, adaptive upsampling is commonly employed in optical flow estimation models such as~\cite{hui2018liteflownet, teed2020raft, luo2021upflow}, but still is not widely used in VFI models.
We adopt `Convex upsample' layer, proposed by Teed \etal~\cite{teed2020raft}, and extend it for VFI to upsample \(\tilde{\textbf{V}}_{t\rightarrow0}^{l}\) and \(\tilde{\textbf{V}}_{t\rightarrow1}^{l}\)  to \(\textbf{V}_{t\rightarrow0}^{l}\) and \(\textbf{V}_{t\rightarrow1}^{l}\) using pixel-wise adaptive kernels.
The detailed structure of CAUN is presented in \textit{Supplementals}.
The adaptive upsampling of the CAUN not only aesthetically enhances the upsampling of flows but it also more effectively captures small objects and complex boundaries in interpolated frames, thanks to its ability to maintain the flows of small objects and prevent mixing the flows at object boundaries.
Our extensive experiments show its effectiveness, which is presented in \cref{subsec:ablation}.

\begin{figure*}[ht]
\centering
    \includegraphics[width=\textwidth]{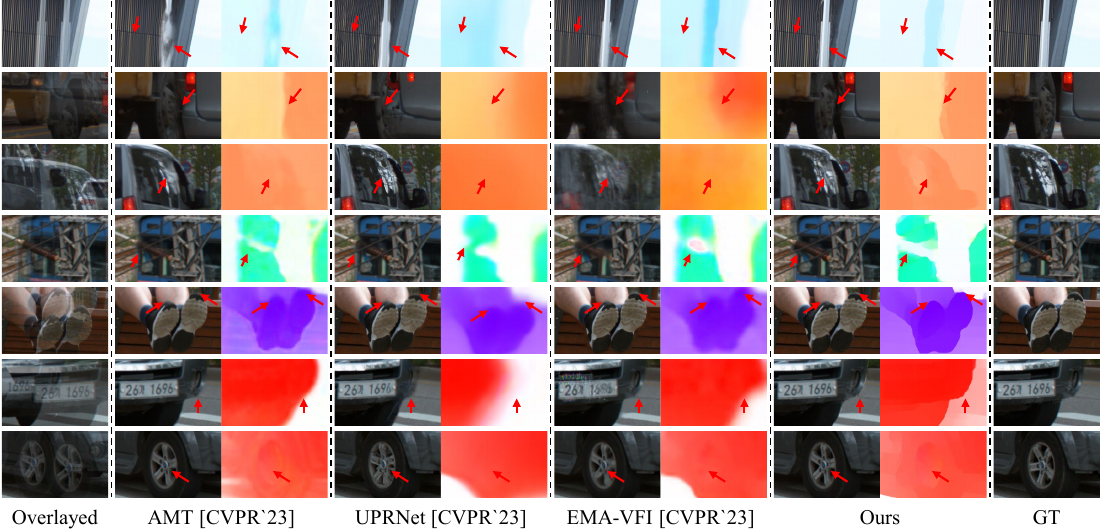}
    \caption{Qualitative comparison for fixed-time interpolation datasets, XTest~\cite{sim2021xvfi}.}
    \label{fig:qual_fix}
\end{figure*}
\subsection{Knowledge Distillation for VFI-centric Flow Supervision (KDVCF)}
\label{subsec:KDVCF}
As discussed in \cref{subsec:vfi}, the flow estimation of pre-trained optical flow models and VFI models operates differently in certain areas, such as blurs and shadows.
Therefore, instead of using pre-trained optical flow models for BiM and flow supervision, we propose Knowledge Distillation for VFI-centric Flow supervision (KDVCF) that provides BiM and flow supervision more suitable for VFI.
KDVCF consists of student process \(\mathcal{P}_{\mathcal{S}}\) and teacher process \(\mathcal{P}_{\mathcal{T}}\).
The two processes sequentially run but share the same architecture with the same weights.
First, as shown in~\cref{fig:main}, \(\mathcal{P}_{\mathcal{T}}\) takes two input pairs, (\(I_0^l, I_t^l\)) and (\(I_t^l,I_1^l\)), and produce precise flow estimations \(\tilde{\textbf{V}}_{t\rightarrow0}^{l,\mathcal{P}_{\mathcal{T}}}\) and \(\tilde{\textbf{V}}_{t\rightarrow1}^{l,\mathcal{P}_{\mathcal{T}}}\). Then the BiM is computed based on \(\tilde{\textbf{V}}_{t\rightarrow0}^{l,\mathcal{P}_{\mathcal{T}}}\) and \(\tilde{\textbf{V}}_{t\rightarrow1}^{l,\mathcal{P}_{\mathcal{T}}}\) according to~\cref{eq:BiM}, and is inputted to the BiMFN of \(\mathcal{P}_{\mathcal{S}}\).
So, the knowledge distillation can be made from \(\mathcal{P}_{\mathcal{T}}\) to \(\mathcal{P}_{\mathcal{S}}\) for flow estimations during training.
Note that \(\mathcal{P}_{\mathcal{S}}\) only remains at inference.

In \(\mathcal{P}_{\mathcal{T}}\), the BiMFN yields \(\tilde{\textbf{V}}_{t\rightarrow0}^{l,\mathcal{P}_{\mathcal{T}}}\) and \(\tilde{\textbf{V}}_{t\rightarrow t|0t}^{l,\mathcal{P}_{\mathcal{T}}}\) for input frame pair \(I_0^l\) and \(I_t^l\).
In this case, \(\tilde{\textbf{V}}_{t\rightarrow 0}^{\mathcal{P}_{\mathcal{T}}}\) can be an accurate flow field estimate, and \(\tilde{\textbf{V}}_{t\rightarrow t|0t}^{l,\mathcal{P}_{\mathcal{T}}}\) is ideally a vector field of all \(\textbf{0}\)'s.
Since our BiM is formatted in terms of a relative motion ratio and an angle between bidirectional flows, the resulting motion ratio \(R\) is constructed as a uniform map of 1's but the angles map \(\Phi\) is undefined because \(\tilde{\textbf{V}}_{t\rightarrow t}^{l,\mathcal{P}_{\mathcal{T}}}\) has zero vectors.
So, \(\Phi\) is filled with angles randomly sampled from a uniform distribution \(\mathcal{U}(0, 2\pi)\) to avoid a bias to any specific angle value.
Thus, the BiM to be inputted to the BiMFN in \(\mathcal{P}_{\mathcal{T}}\) for input pair (\(I_0^l\), \(I_t^l\)) is defined as:
\begin{equation}
\label{eq:teacher_bim0}   
 \textbf{M}_{t\rightarrow{0,t}}^{l,\mathcal{P}_{\mathcal{T}}}=\begin{bmatrix}\textbf{1}_{H/2^{l+2}\times W/2^{l+2}}& \phi_0 \cdot \textbf{1}_{H/2^{l+2}\times W/2^{l+2}}
    \end{bmatrix}^T
\end{equation}
where \(\phi_0 \sim \mathcal{U}(0, 2\pi)\).
Like for another input pair (\(I_t^l\), \(I_1^l\)) in \(\mathcal{P}_{\mathcal{T}}\), the BiMFN yields \(\tilde{\textbf{V}}_{t\rightarrow t|t1}^{l,\mathcal{P}_{\mathcal{T}}}\) and \(\tilde{\textbf{V}}_{t\rightarrow 1}^{l,\mathcal{P}_{\mathcal{T}}}\).
So, the BiM as input to the BiMFN for input pair (\(I_t^l\),\(I_1^l\)) is defined as:
\begin{equation}
\label{eq:teacher_bim1}   
    \textbf{M}_{t\rightarrow{t,1}}^{l,\mathcal{P}_{\mathcal{T}}}=\begin{bmatrix}\textbf{0}_{H/2^{l+2}\times W/2^{l+2}}& \phi_1 \cdot \textbf{1}_{H/2^{l+2}\times W/2^{l+2}}
    \end{bmatrix}^T
\end{equation}
where \(\phi_1 \sim \mathcal{U}(0, 2\pi)\).

To ensure the VFI-centric estimation of \(\tilde{\textbf{V}}_{t\rightarrow 0}^{l,\mathcal{P}_{\mathcal{T}}}\) and \(\tilde{\textbf{V}}_{t\rightarrow 1}^{l,\mathcal{P}_{\mathcal{T}}}\), these flows are further employed to reconstruct the target image \(\hat{I}_t^{l,\mathcal{P}_{\mathcal{T}}}\) in \(\mathcal{P}_{\mathcal{T}}\), along with CAUN and SN, and are trained with a photometric reconstruction loss.
Note that \(\tilde{\textbf{V}}_{t\rightarrow 0}^{l,\mathcal{P}_{\mathcal{T}}}\) and \(\tilde{\textbf{V}}_{t\rightarrow 1}^{l,\mathcal{P}_{\mathcal{T}}}\) are also employed to construct the BiM in \cref{eq:BiM} for \(\mathcal{P}_{\mathcal{S}}\) that operates for source frame pairs, \(I_0^l\) and \(I_1^l\), as well as to supervise the output flow fields, \(\tilde{\textbf{V}}_{t\rightarrow 0}^{l,\mathcal{P}_{\mathcal{S}}}\) and \(\tilde{\textbf{V}}_{t\rightarrow 1}^{l,\mathcal{P}_{\mathcal{S}}}\), of the BiMFN in \(\mathcal{P}_{\mathcal{S}}\).
Unlike estimated flows from pre-trained supervised optical flow models, that are trained to minimize end-point error with GT flows, our \(\tilde{\textbf{V}}_{t\rightarrow 0}^{l,\mathcal{P}_{\mathcal{T}}}\) and \(\tilde{\textbf{V}}_{t\rightarrow 1}^{l,\mathcal{P}_{\mathcal{T}}}\) align precisely with the objectives of VFI. Consequently, the distillation is fully beneficial and effectively tailored to the \(\mathcal{P}_{\mathcal{S}}\)'s purpose.
Extensive experiments showed that our KDVCF is more beneficial than the distillation from pre-trained supervised optical flow models.

During \(\mathcal{P}_{\mathcal{S}}\), our BiM-VFI learns various uniform and non-uniform motions with a distinct motion descriptor (BiM) and a precise VFI-centric flow supervision produced by \(\mathcal{P}_{\mathcal{T}}\).
It is worth menting that \(\tilde{\textbf{V}}_{t\rightarrow 0}^{l,\mathcal{P}_{\mathcal{S}}}\) and \(\tilde{\textbf{V}}_{t\rightarrow 1}^{l,\mathcal{P}_{\mathcal{S}}}\) in \(\mathcal{P}_{\mathcal{S}}\) can be precisely learned in the help of our BiM based on accurate flow fields \(\tilde{\textbf{V}}_{t\rightarrow 0}^{l,\mathcal{P}_{\mathcal{T}}}\) and \(\tilde{\textbf{V}}_{t\rightarrow 1}^{l,\mathcal{P}_{\mathcal{T}}}\) obtained in \(\mathcal{P}_{\mathcal{T}}\), with the availability of target frame \(I_t^l\).
For inference, the BiM for uniform motions is fed into the BiMFN, and our BiM-VFI can correspondingly construct clean interpolated frames with uniform motions although they might not be well aligned with ground truth target frames with non-uniform motions.

\section{Experiments}

\begin{table*}
\centering
{\small
\setlength\tabcolsep{3pt}
\begin{tabular}{l | c c c c c | c c c | c c c | c c c}
\multirow{3}{*}{Methods} & \multicolumn{5}{c}{\multirow{2}{*}{Vimeo90K-septuplet}} & \multicolumn{9}{c}{SNU-FILM-arb} \\
 &  &  &  &  &  & \multicolumn{3}{c}{Medium} & \multicolumn{3}{c}{Hard} & \multicolumn{3}{c}{Extreme} \\
 & psnr & ssim & lpips & stlpips & niqe & lpips & stlpips & niqe & lpips & stlpips & niqe & lpips & stlpips & niqe \\
 \hline\hline
RIFE~\cite{huang2022real} & 28.22 & 0.912 & 0.105 & 0.084 & 6.663 & 0.038 & 0.021 & 4.975 & 0.072 & 0.054 & 5.177 & 0.134 & 0.116 & 5.358 \\
IFRNet~\cite{kong2022ifrnet} & 28.26 & 0.915 & 0.088 & 0.094 & 6.422 & 0.046 & 0.037 & 4.921 & 0.066 & 0.054 & 4.870 & 0.115 & 0.094 & 4.793 \\
M2M-PWC~\cite{hu2022many} & 27.43 & 0.901 & 0.081 & 0.055 & \underline{6.026} & 0.030 & 0.014 & 4.806 & 0.049 & 0.025 & \underline{4.758} & \underline{0.089} & \underline{0.055} & \textbf{4.657} \\
AMT-S~\cite{li2023amt} & \underline{28.52} & \underline{0.920} & 0.101 & 0.105 & 6.866 & 0.072 & 0.046 & 5.443 & 0.089 & 0.060 & 5.444 & 0.135 & 0.098 & 5.500 \\
UPRNet~\cite{jin2023unified} & 27.23 & 0.900 & 0.087 & 0.061 & 6.280 & 0.031 & 0.014 & 4.837 & 0.054 & 0.028 & 4.909 & 0.092 & 0.056 & 4.923 \\
EMA-VFI~\cite{zhou2023exploring} & \textbf{29.41} & \textbf{0.928} & 0.086 & 0.079 & 6.736 & 0.041 & 0.025 & 4.984 & 0.072 & 0.054 & 5.236 & 0.125 & 0.106 & 5.522 \\
\hline
{[D,R]}-RIFE~\cite{zhong2023clearer} & 27.41 & 0.901 & 0.086 & 0.059 & 6.220 & 0.027 & 0.011 & 4.751 & 0.050 & 0.026 & 4.829 & 0.101 & 0.072 & 4.898 \\
{[D,R]}-IFRNet~\cite{zhong2023clearer} & 27.13 & 0.899 & \underline{0.078} & 0.053 & {6.167} & \underline{0.026} & \underline{0.010} & 4.757 & \underline{0.048} & \underline{0.023} & {4.798} & {0.095} & {0.062} & 4.821 \\
{[D,R]}-AMT-S~\cite{zhong2023clearer} & 27.17 & 0.902 & 0.081 & 0.053 & 6.326 & 0.029 & 0.013 & \underline{4.747} & 0.054 & 0.028 & 4.849 & 0.107 & 0.071 & 5.017 \\
{[D,R]}-EMA-VFI~\cite{zhong2023clearer} & 24.73 & 0.851 & 0.081 & \underline{0.046}& 6.227 & 0.032 & 0.013 & 4.864 & 0.055 & 0.027 & 4.978 & 0.106 & 0.071 & 5.120 \\
\hline
Ours & 26.83 & 0.893 & \textbf{0.070} & \textbf{0.043} &\textbf{ 6.009} & \textbf{0.023} & \textbf{0.008} & \textbf{4.693} & \textbf{0.039} & \textbf{0.015} & \textbf{4.725} & \textbf{0.070} & \textbf{0.034} & \underline{4.751} \\

\end{tabular}
}
\caption{Quantitative comparisons on arbitrary-time interpolation datasets.}
\label{tab:quan_arb}
\end{table*}
\subsection{Experiments details}
\textbf{Benchmarks.}
We tested our BiM-VFI for both fixed-time (\(t\)=0.5) and arbitrary time interpolation datasets.
For fixed-time interpolation, we used Vimeo90K triplet~\cite{xue2019video}, SNU-FILM~\cite{choi2020channel}, and XTest~\cite{sim2021xvfi} datasets.
For arbitrary-time interpolation, we conducted experiments on Vimeo90K septuplet~\cite{xue2019video} and SNU-FILM-arb~\cite{guo2024generalizable} datasets.

\noindent\textbf{Metrics.}
We measured both pixel-centric metrics (PSNR and SSIM) and perceptual metrics (LPIPS~\cite{zhang2018unreasonable}, STLPIPS~\cite{ghildyal2022shift}, and NIQE~\cite{mittal2012making}) for quantitative comparisons between our BiM-VFI and SOTA methods.
Both LPIPS and STLPIPS are full-reference perceptual metrics, while NIQE is a no-reference perceptual metric.
LPIPS and STLPIPS compute the similarity between features of the input and reference images using a pre-trained network.
However, while LPIPS exhibits a significant drop in metric performance in the presence of minor misalignments, STLPIPS is more tolerant of such misalignments.

\subsection{Qualitative results}
We compared our BiM-VFI with both State-of-the-art (SOTA) arbitrary-time and fixed-time VFI models for various datasets.
\cref{fig:qual_arb} compares arbitrary-time interpolation results at \(t=0.25\), \(0.5\), and \(0.75\) for SNU-FILM-arb extreme datasets~\cite{choi2020channel}.
While the objects (dog's heads and balls in the upper figures, legs in the lower figures) with very fast motions are blurrily reconstructed by all the SOTA models, including the models plugged with `distance indexing' and `iterative reference-based estimation'~\cite{zhong2023clearer} (denoted as [D,R]), our BiM-VFI successfully restored much cleaner frames than other methods.

Also, we compared our BiM-VFI with other SOTA models (\cref{fig:qual_fix}) for a fixed-time VFI dataset, XTest-set~\cite{sim2021xvfi}.
Even though other SOTA VFI models are trained on fixed-time datasets, our BiM-VFI outperforms them qualitatively.
As shown in \cref{fig:qual_fix}, the small objects (streetlight pole in the 1st row, power lines in the 3rd row, car plate numbers in the 5th row) and the complex boundaries between a car wheel and a rear bumper in the 2nd row are well constructed in the interpolated frame by our BiM-VFI, while the other SOTA models fail to interpolate repeated patterns (building wall with vertical strips in the 1st row) or incurred blurs in object boundaries.
It is worth noting for the estimated flows in the 3rd, 5th, 7th, and 9th columns that our BiM-VFI can estimate sharper flows even in object boundaries and repeated patterns compared to other SOTA models.

\begin{table*}
\centering
{\small
\setlength\tabcolsep{3pt}
\begin{tabular}{l | c | c | c | c | c | c}
\multirow{3}{*}{Methods} & \multirow{2}{*}{Vimeo 90K-triplet} & \multicolumn{4}{c}{SNU-FILM} & \multirow{2}{*}{XTest} \\
 &  & Easy & Medium & Hard & Extreme &   \\
 \hline\hline
AMT-G~\cite{li2023amt}  & \textbf{0.019}/\textbf{0.012}/{5.327} & 0.022/{0.008}/4.822 & 0.035/{0.015}/4.924 & 0.060/0.028/4.993 & 0.112/0.068/4.993  & 0.134/0.097/6.883 \\
M2M-PWC~\cite{hu2022many}  & 0.025/0.017/5.346 & 0.021/0.009/4.765 & 0.036/0.016/{4.824} & 0.063/0.033/\textbf{4.773}  & 0.212/\underline{0.057}/6.082 & 0.211/0.135/6.005 \\
UPRNet~\cite{jin2023unified}  & 0.022/0.015/5.367 & \underline{0.018}/{0.008}/\underline{4.703} & 0.034/{0.015}/4.853 & 0.062/0.030/4.975 & \underline{0.110}/0.067/{5.052}  & \underline{0.095}/\textbf{0.059}/{6.372} \\
RIFE~\cite{huang2022real}  & 0.022/0.014/\underline{5.240} & \underline{0.018}/\underline{0.007}/4.709 & \underline{0.032}/0.014/\underline{4.813} & 0.066/0.037/4.872 & 0.138/0.099/4.935 & 0.295/0.209/6.419 \\
XVFI~\cite{sim2021xvfi}  & 0.028/0.019/\textbf{5.236} & 0.027/0.015/4.829 & 0.040/0.024/4.847 & 0.068/0.043/\underline{4.780} & 0.120/0.083/\textbf{4.618}  & 0.109/0.072/\underline{6.041} \\
IFRNet~\cite{kong2022ifrnet}  & \textbf{0.019}/\underline{0.013}/5.267 & 0.020/0.008/4.820 & \underline{0.032}/\underline{0.013}/4.889 & \underline{0.056}/\underline{0.027}/4.890 & 0.113/0.073/\underline{4.856}  & 0.190/0.134/\textbf{5.892} \\
EMA-VFI~\cite{zhou2023exploring} & \underline{0.020}/\underline{0.013}/5.350 & 0.019/{0.008}/4.704 & {0.033}/{0.015}/4.847 & {0.059}/0.030/4.979 & 0.113/0.073/5.087 & 0.139/0.099/7.008 \\
Ours & \underline{0.020}/\textbf{0.012}/{5.283} & \textbf{0.017}/\textbf{0.006}/\textbf{4.678} & \textbf{0.029}/\textbf{0.011}/\textbf{4.773} & \textbf{0.052}/\textbf{0.022}/{4.863} & \textbf{0.097}/\textbf{0.052}/{4.942}  & \textbf{0.089}/\underline{0.060}/{6.717} \\

\end{tabular}
}
\caption{Quantitative comparisons on fixed time interpolation datasets.}
\label{tab:quan_fix}
\end{table*}







\begin{table}[ht]
\centering
{\small
\setlength\tabcolsep{3pt}

\begin{tabular}{l | l | ccc}
Ablation & Methods & lpips & stlpips & niqe \\
\hline
\hline
\multirow{3}{*}{(a) BiM} & \((T)\) & 0.098 & 0.077 & 6.838 \\
 & \((R)\) & \multicolumn{3}{c}{Train failed} \\
 & \((T,\Phi)\) & 0.074 & 0.045 & 6.222 \\
 \hline
\multirow{2}{*}{(b) KDVCF} & w/o KDVCF & 0.076 & 0.050 & 6.358 \\
 & w\ Flow loss & 0.076 & 0.049 & 6.334 \\
 \hline
\multirow{2}{*}{(c) Modules} & BiM concat & 0.071 & 0.044 & 6.059 \\
 & w/o CAUN & 0.076 & 0.045 & 6.124 \\
\hline
Full & Ours & \textbf{0.070} & \textbf{0.043} & \textbf{6.009} \\

\end{tabular}
}
\caption{Ablation studies on BiM, KDVCF, and modules.}
\label{tab:ablation_all}
\end{table}

\subsection{Quantitative results}
\label{subsec:quan}
\cref{tab:quan_arb} compares our BiM-VFI and SOTA methods for arbitrary time interpolation test datasets.
While our BiM-VFI underperforms in terms of pixel-centric metrics for Vimeo90K-septuplet~\cite{xue2019video}, it demonstrated significantly higher performance by a large margin in terms of the perceptual metrics for Vimeo90K-septuplet and SNU-FILM-arb~\cite{guo2024generalizable}.
For the Vimeo90K-septuplet and the SNU-FILM-arb (Medium, Hard and Extreme) data sets, our BiM-VFI outperformed all other VFI models in terms of perceptual metrics (LPIPS, STLPIPS, and NIQE) except M2M-PWC~\cite{hu2022many} with 4.657 only in NIQE metric for SNU-FILM-arb Extreme data set.
As shown in \cref{tab:quan_arb}, there are large metric gaps in pixel-centric metrics (PSNR and SSIM) between our BiM-VFI and most of the other SOTA methods because our BiM-VFI assumes uniform motion in interpolated frame reconstruction at inference.
In spite of relatively large values of pixel-centric metrics for the other SOTA methods, their reconstructed interpolated frames are very blurry as shown in \cref{fig:qual_arb}.
These pixel-centric metrics conducted on test datasets containing non-uniform motions do not match the perceptual qualities as reported in \cite{zhong2023clearer, kiefhaber2024benchmarking}.

\cref{tab:quan_fix} shows the frame interpolation results for fixed-time interpolation data sets.
As shown, our BiM-VFI also achieved comparable or superior performance to other SOTA models across most datasets although it was not trained for frame interpolation tasks at fixed target times.

\subsection{Ablation studies}
\label{subsec:ablation}
We conducted ablation studies on the proposed BiM, KDVCF, and model components.
First, for the BiM, we performed ablations by replacing \(R\) with time indexing \(T\) and by removing the \(\Phi\) component, where we supervised the experiments using our proposed KDVCF on Vimeo90K septuplet datasets~\cite{xue2019video}.
When the \(\Phi\) component was removed, we excluded corresponding network components from the BiMFN.
As shown in \cref{tab:ablation_all} (a), our proposed BiM achieved the best perceptual metric, which indicates BiM can resolve ambiguity at training, thus interpolating frames with much fewer blurs under uniform-motion scenarios.

In \cref{tab:ablation_all} (b), we compared our proposed KDVCF with the supervision using FlowFormer~\cite{huang2022flowformer}-extracted flows and the training without any flow supervision.
As shown, our proposed KDVCF yielded higher perceptual metrics than the pre-trained flow supervision, confirming that our flow estimations from \(\mathcal{P}_{\mathcal{T}}\) provide more suitable BiM and supervision for training our BiM-VFI.

Lastly, in \cref{tab:ablation_all} (c), we compared replacing elementwise multiplication in BiM-MConv with a module that concatenates \(F_V\), \(F_R\), and \(F_{\Phi}\) followed by a convolution layer, and removing the adaptive upsampling (CAUN).
Our BiMFN design was found to better leverage the BiM, and the CAUN not only effectively upsamples the flow but also improves the perceptual quality of frame interpolation results.

\subsection{Limitation on Uniform Motion Assumption}
\label{subsec:uniform_motion_assumption}
Our BiM-VFI is limited for frame interpolation under a uniform motion assumption at inference time, due to the unavailability of the target BiM, which is an inherit limitation as for all other VFI methods. \cref{fig:limitation} demonstrates that other VFI methods, such as those employing time indexing (EMA-VFI~\cite{zhou2023exploring}) and distance indexing ([D,R]-EMA-VFI~\cite{zhong2023clearer}), also fail to adequately interpolate the target frame with non-uniform motions, where the boundary of the tree at the target frame (indicated by the blue line) does not align with the interpolation results from EMA-VFI, [D,R]-EMA-VFI, or our BiM-VFI (indicated by the green line).
Moreover, the boundary of the tree from the interpolated frame using EMA-VFI that employs time indexing is well aligned with those of models using distance indexing or BiM under the uniform motion assumption.
This suggests that the time-indexing-based method implicitly tends to comply with the uniform motion assumption from training where uniform motion is dominant, thereby supporting that the uniform motion assumption at inference is more likely enforced for all the methods in VFI.
\begin{figure}[ht]
\centering
    \includegraphics[width=0.8\columnwidth]{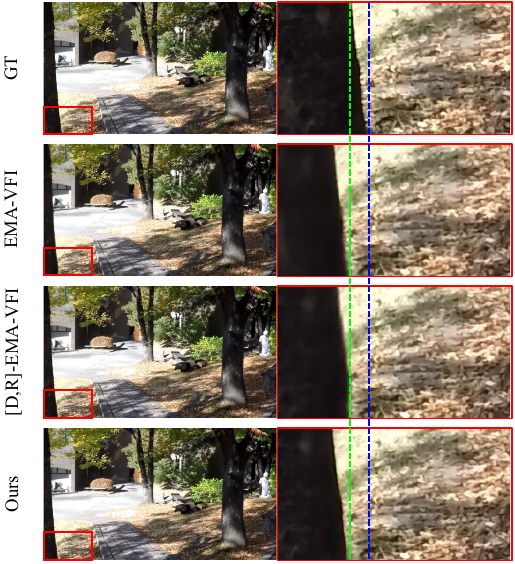}
    \caption{Visual comparisons of interpolating non-uniform motion target frame. 2nd column is zoom-in version of the red boxes in 1st column.}
    \label{fig:limitation}
\end{figure}


\section{Conclusion}
We proposed Bidirectional Motion field-guided VFI (BiM-VFI), which consists of (i) a distinct motion descriptor, named Bidirectional Motion Field (BiM); (ii) a BiM-guided FlowNet (BiMFN) and Context-Aware Upsampling Network (CAUN); and (iii) a Knowledge Distillation for VFI-centric Flow supervision (KDVCF).
Our BiM-VFI trained with the BiM can resolve the time-to-location ambiguity during training and interpolate clear frames by not averaging all the possible interpolation results.
In inference, our BiM-VFI can interpolate frames with very clean frames under uniform motion assumptions.
Extensive experiments have verified the effectiveness of our BiM-VFI, perceptually outperforming the recent SOTA models significantly.

\section{Acknowledgement}
This work was supported by Institute of Information \& communications Technology Planning \& Evaluation (IITP) grant funded by the Korean Government [Ministry of Science and ICT (Information and Communications Technology)] (Project Number: RS-2022-00144444, Project Title: Deep Learning Based Visual Representational Learning and Rendering of Static and Dynamic Scenes, 100\%).

{
    \small
    \bibliographystyle{ieeenat_fullname}
    \bibliography{main}

\begin{thebibliography}{45}
\providecommand{\natexlab}[1]{#1}
\providecommand{\url}[1]{\texttt{#1}}
\expandafter\ifx\csname urlstyle\endcsname\relax
  \providecommand{\doi}[1]{doi: #1}\else
  \providecommand{\doi}{doi: \begingroup \urlstyle{rm}\Url}\fi

\bibitem[Charbonnier et~al.(1994)Charbonnier, Blanc-Feraud, Aubert, and Barlaud]{charbonnier1994two}
Pierre Charbonnier, Laure Blanc-Feraud, Gilles Aubert, and Michel Barlaud.
\newblock Two deterministic half-quadratic regularization algorithms for computed imaging.
\newblock In \emph{IEEE Int. Conf. Image Process.}, pages 168--172. IEEE, 1994.

\bibitem[Chen and Zwicker(2022)]{chen2022improving}
Shuhong Chen and Matthias Zwicker.
\newblock Improving the perceptual quality of 2d animation interpolation.
\newblock In \emph{Eur. Conf. Comput. Vis.}, pages 271--287. Springer, 2022.

\bibitem[Choi et~al.(2020)Choi, Kim, Han, Xu, and Lee]{choi2020channel}
Myungsub Choi, Heewon Kim, Bohyung Han, Ning Xu, and Kyoung~Mu Lee.
\newblock Channel attention is all you need for video frame interpolation.
\newblock In \emph{AAAI}, pages 10663--10671, 2020.

\bibitem[Dosovitskiy et~al.(2021)Dosovitskiy, Beyer, Kolesnikov, Weissenborn, Zhai, Unterthiner, Dehghani, Minderer, Heigold, Gelly, Uszkoreit, and Houlsby]{dosovitskiy2021an}
Alexey Dosovitskiy, Lucas Beyer, Alexander Kolesnikov, Dirk Weissenborn, Xiaohua Zhai, Thomas Unterthiner, Mostafa Dehghani, Matthias Minderer, Georg Heigold, Sylvain Gelly, Jakob Uszkoreit, and Neil Houlsby.
\newblock An image is worth 16x16 words: Transformers for image recognition at scale.
\newblock In \emph{Int. Conf. Learn. Represent.}, 2021.

\bibitem[Ghildyal and Liu(2022)]{ghildyal2022shift}
Abhijay Ghildyal and Feng Liu.
\newblock Shift-tolerant perceptual similarity metric.
\newblock In \emph{Eur. Conf. Comput. Vis.}, pages 91--107. Springer, 2022.

\bibitem[Guo et~al.(2024)Guo, Li, and Loy]{guo2024generalizable}
Zujin Guo, Wei Li, and Chen~Change Loy.
\newblock Generalizable implicit motion modeling for video frame interpolation.
\newblock \emph{arXiv preprint arXiv:2407.08680}, 2024.

\bibitem[Hu et~al.(2022)Hu, Niklaus, Sclaroff, and Saenko]{hu2022many}
Ping Hu, Simon Niklaus, Stan Sclaroff, and Kate Saenko.
\newblock Many-to-many splatting for efficient video frame interpolation.
\newblock In \emph{IEEE Conf. Comput. Vis. Pattern Recog.}, pages 3553--3562, 2022.

\bibitem[Huang et~al.(2022{\natexlab{a}})Huang, Shi, Zhang, Wang, Cheung, Qin, Dai, and Li]{huang2022flowformer}
Zhaoyang Huang, Xiaoyu Shi, Chao Zhang, Qiang Wang, Ka~Chun Cheung, Hongwei Qin, Jifeng Dai, and Hongsheng Li.
\newblock Flowformer: A transformer architecture for optical flow.
\newblock In \emph{Eur. Conf. Comput. Vis.}, pages 668--685. Springer, 2022{\natexlab{a}}.

\bibitem[Huang et~al.(2022{\natexlab{b}})Huang, Zhang, Heng, Shi, and Zhou]{huang2022real}
Zhewei Huang, Tianyuan Zhang, Wen Heng, Boxin Shi, and Shuchang Zhou.
\newblock Real-time intermediate flow estimation for video frame interpolation.
\newblock In \emph{Eur. Conf. Comput. Vis.}, pages 624--642. Springer, 2022{\natexlab{b}}.

\bibitem[Hui et~al.(2018)Hui, Tang, and Loy]{hui2018liteflownet}
Tak-Wai Hui, Xiaoou Tang, and Chen~Change Loy.
\newblock Liteflownet: A lightweight convolutional neural network for optical flow estimation.
\newblock In \emph{IEEE Conf. Comput. Vis. Pattern Recog.}, pages 8981--8989, 2018.

\bibitem[Jaderberg et~al.(2015)Jaderberg, Simonyan, Zisserman, et~al.]{jaderberg2015spatial}
Max Jaderberg, Karen Simonyan, Andrew Zisserman, et~al.
\newblock Spatial transformer networks.
\newblock \emph{Adv. Neural Inform. Process. Syst.}, 28, 2015.

\bibitem[Jiang et~al.(2018)Jiang, Sun, Jampani, Yang, Learned-Miller, and Kautz]{jiang2018super}
Huaizu Jiang, Deqing Sun, Varun Jampani, Ming-Hsuan Yang, Erik Learned-Miller, and Jan Kautz.
\newblock Super slomo: High quality estimation of multiple intermediate frames for video interpolation.
\newblock In \emph{IEEE Conf. Comput. Vis. Pattern Recog.}, pages 9000--9008, 2018.

\bibitem[Jin et~al.(2023)Jin, Wu, Chen, Chen, Koo, and Hahm]{jin2023unified}
Xin Jin, Longhai Wu, Jie Chen, Youxin Chen, Jayoon Koo, and Cheul-hee Hahm.
\newblock A unified pyramid recurrent network for video frame interpolation.
\newblock In \emph{IEEE Conf. Comput. Vis. Pattern Recog.}, pages 1578--1587, 2023.

\bibitem[Jonschkowski et~al.(2020)Jonschkowski, Stone, Barron, Gordon, Konolige, and Angelova]{jonschkowski2020matters}
Rico Jonschkowski, Austin Stone, Jonathan~T Barron, Ariel Gordon, Kurt Konolige, and Anelia Angelova.
\newblock What matters in unsupervised optical flow.
\newblock In \emph{Eur. Conf. Comput. Vis.}, pages 557--572. Springer, 2020.

\bibitem[Kiefhaber et~al.(2024)Kiefhaber, Niklaus, Liu, and Schaub-Meyer]{kiefhaber2024benchmarking}
Simon Kiefhaber, Simon Niklaus, Feng Liu, and Simone Schaub-Meyer.
\newblock Benchmarking video frame interpolation, 2024.

\bibitem[Kong et~al.(2022)Kong, Jiang, Luo, Chu, Huang, Tai, Wang, and Yang]{kong2022ifrnet}
Lingtong Kong, Boyuan Jiang, Donghao Luo, Wenqing Chu, Xiaoming Huang, Ying Tai, Chengjie Wang, and Jie Yang.
\newblock Ifrnet: Intermediate feature refine network for efficient frame interpolation.
\newblock In \emph{IEEE Conf. Comput. Vis. Pattern Recog.}, pages 1969--1978, 2022.

\bibitem[Lee et~al.(2020)Lee, Kim, Chung, Pak, Ban, and Lee]{lee2020adacof}
Hyeongmin Lee, Taeoh Kim, Tae-young Chung, Daehyun Pak, Yuseok Ban, and Sangyoun Lee.
\newblock Adacof: Adaptive collaboration of flows for video frame interpolation.
\newblock In \emph{IEEE Conf. Comput. Vis. Pattern Recog.}, pages 5316--5325, 2020.

\bibitem[Li et~al.(2023)Li, Zhu, Han, Hou, Guo, and Cheng]{li2023amt}
Zhen Li, Zuo-Liang Zhu, Ling-Hao Han, Qibin Hou, Chun-Le Guo, and Ming-Ming Cheng.
\newblock Amt: All-pairs multi-field transforms for efficient frame interpolation.
\newblock In \emph{IEEE Conf. Comput. Vis. Pattern Recog.}, pages 9801--9810, 2023.

\bibitem[Liu et~al.(2021)Liu, Lin, Cao, Hu, Wei, Zhang, Lin, and Guo]{liu2021swin}
Ze Liu, Yutong Lin, Yue Cao, Han Hu, Yixuan Wei, Zheng Zhang, Stephen Lin, and Baining Guo.
\newblock Swin transformer: Hierarchical vision transformer using shifted windows.
\newblock In \emph{Int. Conf. Comput. Vis.}, pages 10012--10022, 2021.

\bibitem[Loshchilov(2017)]{loshchilov2017decoupled}
I Loshchilov.
\newblock Decoupled weight decay regularization.
\newblock \emph{arXiv preprint arXiv:1711.05101}, 2017.

\bibitem[Loshchilov and Hutter(2016)]{loshchilov2016sgdr}
Ilya Loshchilov and Frank Hutter.
\newblock Sgdr: Stochastic gradient descent with warm restarts.
\newblock \emph{arXiv preprint arXiv:1608.03983}, 2016.

\bibitem[Lu et~al.(2022)Lu, Wu, Lin, Lu, and Jia]{lu2022video}
Liying Lu, Ruizheng Wu, Huaijia Lin, Jiangbo Lu, and Jiaya Jia.
\newblock Video frame interpolation with transformer.
\newblock In \emph{IEEE Conf. Comput. Vis. Pattern Recog.}, pages 3532--3542, 2022.

\bibitem[Luo et~al.(2021)Luo, Wang, Liu, Fan, Wang, and Sun]{luo2021upflow}
Kunming Luo, Chuan Wang, Shuaicheng Liu, Haoqiang Fan, Jue Wang, and Jian Sun.
\newblock Upflow: Upsampling pyramid for unsupervised optical flow learning.
\newblock In \emph{IEEE Conf. Comput. Vis. Pattern Recog.}, pages 1045--1054, 2021.

\bibitem[Meister et~al.(2018)Meister, Hur, and Roth]{meister2018unflow}
Simon Meister, Junhwa Hur, and Stefan Roth.
\newblock Unflow: Unsupervised learning of optical flow with a bidirectional census loss.
\newblock In \emph{AAAI}, 2018.

\bibitem[Mittal et~al.(2012)Mittal, Soundararajan, and Bovik]{mittal2012making}
Anish Mittal, Rajiv Soundararajan, and Alan~C Bovik.
\newblock Making a “completely blind” image quality analyzer.
\newblock \emph{IEEE Sign. Process. Letters}, 20\penalty0 (3):\penalty0 209--212, 2012.

\bibitem[Niklaus and Liu(2020)]{niklaus2020softmax}
Simon Niklaus and Feng Liu.
\newblock Softmax splatting for video frame interpolation.
\newblock In \emph{IEEE Conf. Comput. Vis. Pattern Recog.}, pages 5437--5446, 2020.

\bibitem[Niklaus et~al.(2017{\natexlab{a}})Niklaus, Mai, and Liu]{niklaus2017ada}
Simon Niklaus, Long Mai, and Feng Liu.
\newblock Video frame interpolation via adaptive convolution.
\newblock In \emph{IEEE Conf. Comput. Vis. Pattern Recog.}, pages 670--679, 2017{\natexlab{a}}.

\bibitem[Niklaus et~al.(2017{\natexlab{b}})Niklaus, Mai, and Liu]{niklaus2017sep}
Simon Niklaus, Long Mai, and Feng Liu.
\newblock Video frame interpolation via adaptive separable convolution.
\newblock In \emph{Int. Conf. Comput. Vis.}, pages 261--270, 2017{\natexlab{b}}.

\bibitem[Park et~al.(2020)Park, Ko, Lee, and Kim]{park2020bmbc}
Junheum Park, Keunsoo Ko, Chul Lee, and Chang-Su Kim.
\newblock Bmbc: Bilateral motion estimation with bilateral cost volume for video interpolation.
\newblock In \emph{Eur. Conf. Comput. Vis.}, pages 109--125. Springer, 2020.

\bibitem[Park et~al.(2021)Park, Lee, and Kim]{park2021asymmetric}
Junheum Park, Chul Lee, and Chang-Su Kim.
\newblock Asymmetric bilateral motion estimation for video frame interpolation.
\newblock In \emph{IEEE Conf. Comput. Vis. Pattern Recog.}, pages 14539--14548, 2021.

\bibitem[Park et~al.(2023)Park, Kim, and Kim]{park2023biformer}
Junheum Park, Jintae Kim, and Chang-Su Kim.
\newblock Biformer: Learning bilateral motion estimation via bilateral transformer for 4k video frame interpolation.
\newblock In \emph{IEEE Conf. Comput. Vis. Pattern Recog.}, pages 1568--1577, 2023.

\bibitem[Plack et~al.(2023)Plack, Briedis, Djelouah, Hullin, Gross, and Schroers]{plack2023frame}
Markus Plack, Karlis~Martins Briedis, Abdelaziz Djelouah, Matthias~B Hullin, Markus Gross, and Christopher Schroers.
\newblock Frame interpolation transformer and uncertainty guidance.
\newblock In \emph{IEEE Conf. Comput. Vis. Pattern Recog.}, pages 9811--9821, 2023.

\bibitem[Ronneberger et~al.(2015)Ronneberger, Fischer, and Brox]{ronneberger2015u}
Olaf Ronneberger, Philipp Fischer, and Thomas Brox.
\newblock U-net: Convolutional networks for biomedical image segmentation.
\newblock In \emph{International Conference on Medical image computing and computer-assisted intervention}, pages 234--241. Springer, 2015.

\bibitem[Shi et~al.(2022)Shi, Xu, Liu, Chen, and Yang]{shi2022video}
Zhihao Shi, Xiangyu Xu, Xiaohong Liu, Jun Chen, and Ming-Hsuan Yang.
\newblock Video frame interpolation transformer.
\newblock In \emph{IEEE Conf. Comput. Vis. Pattern Recog.}, pages 17482--17491, 2022.

\bibitem[Sim et~al.(2021)Sim, Oh, and Kim]{sim2021xvfi}
Hyeonjun Sim, Jihyong Oh, and Munchurl Kim.
\newblock Xvfi: extreme video frame interpolation.
\newblock In \emph{IEEE Conf. Comput. Vis. Pattern Recog.}, pages 14489--14498, 2021.

\bibitem[Siyao et~al.(2021)Siyao, Zhao, Yu, Sun, Metaxas, Loy, and Liu]{siyao2021deep}
Li Siyao, Shiyu Zhao, Weijiang Yu, Wenxiu Sun, Dimitris Metaxas, Chen~Change Loy, and Ziwei Liu.
\newblock Deep animation video interpolation in the wild.
\newblock In \emph{IEEE Conf. Comput. Vis. Pattern Recog.}, pages 6587--6595, 2021.

\bibitem[Sun et~al.(2018)Sun, Yang, Liu, and Kautz]{sun2018pwc}
Deqing Sun, Xiaodong Yang, Ming-Yu Liu, and Jan Kautz.
\newblock Pwc-net: Cnns for optical flow using pyramid, warping, and cost volume.
\newblock In \emph{IEEE Conf. Comput. Vis. Pattern Recog.}, pages 8934--8943, 2018.

\bibitem[Teed and Deng(2020)]{teed2020raft}
Zachary Teed and Jia Deng.
\newblock Raft: Recurrent all-pairs field transforms for optical flow.
\newblock In \emph{Eur. Conf. Comput. Vis.}, pages 402--419. Springer, 2020.

\bibitem[Xiang et~al.(2020)Xiang, Tian, Zhang, Fu, Allebach, and Xu]{xiang2020zooming}
Xiaoyu Xiang, Yapeng Tian, Yulun Zhang, Yun Fu, Jan~P Allebach, and Chenliang Xu.
\newblock Zooming slow-mo: Fast and accurate one-stage space-time video super-resolution.
\newblock In \emph{IEEE Conf. Comput. Vis. Pattern Recog.}, pages 3370--3379, 2020.

\bibitem[Xu et~al.(2019)Xu, Siyao, Sun, Yin, and Yang]{xu2019quadratic}
Xiangyu Xu, Li Siyao, Wenxiu Sun, Qian Yin, and Ming-Hsuan Yang.
\newblock Quadratic video interpolation.
\newblock \emph{Adv. Neural Inform. Process. Syst.}, 32, 2019.

\bibitem[Xue et~al.(2019)Xue, Chen, Wu, Wei, and Freeman]{xue2019video}
Tianfan Xue, Baian Chen, Jiajun Wu, Donglai Wei, and William~T Freeman.
\newblock Video enhancement with task-oriented flow.
\newblock \emph{Int. J. Comput. Vis.}, 127:\penalty0 1106--1125, 2019.

\bibitem[Zhang et~al.(2023)Zhang, Zhu, Wang, Chen, Wu, and Wang]{zhang2023extracting}
Guozhen Zhang, Yuhan Zhu, Haonan Wang, Youxin Chen, Gangshan Wu, and Limin Wang.
\newblock Extracting motion and appearance via inter-frame attention for efficient video frame interpolation.
\newblock In \emph{IEEE Conf. Comput. Vis. Pattern Recog.}, pages 5682--5692, 2023.

\bibitem[Zhang et~al.(2018)Zhang, Isola, Efros, Shechtman, and Wang]{zhang2018unreasonable}
Richard Zhang, Phillip Isola, Alexei~A Efros, Eli Shechtman, and Oliver Wang.
\newblock The unreasonable effectiveness of deep features as a perceptual metric.
\newblock In \emph{IEEE Conf. Comput. Vis. Pattern Recog.}, pages 586--595, 2018.

\bibitem[Zhong et~al.(2024)Zhong, Krishnan, Sun, Qiao, Ma, and Wang]{zhong2023clearer}
Zhihang Zhong, Gurunandan Krishnan, Xiao Sun, Yu Qiao, Sizhuo Ma, and Jian Wang.
\newblock Clearer frames, anytime: Resolving velocity ambiguity in video frame interpolation.
\newblock In \emph{Eur. Conf. Comput. Vis.}, 2024.

\bibitem[Zhou et~al.(2023)Zhou, Li, Han, and Lu]{zhou2023exploring}
Kun Zhou, Wenbo Li, Xiaoguang Han, and Jiangbo Lu.
\newblock Exploring motion ambiguity and alignment for high-quality video frame interpolation.
\newblock In \emph{IEEE Conf. Comput. Vis. Pattern Recog.}, pages 22169--22179, 2023.

\end{thebibliography}
}
\clearpage
\setcounter{page}{1}
\maketitlesupplementary
\appendix
In this supplementary material, we first provide additional details of our proposed BiM-VFI.
Especially, the detailed network structure of CAUN and SN, loss functions, implementation details, and the proof of how BiM can describe bidirectional motion distinctly are explained in \cref{sec:detail}.
Subsequently, in \cref{sec:additional_results}, we provided additional experimental results that could not be included in the main paper due to the page limitation.
In \cref{subsec:supple_quan}, pixel-centric metrics (PSNR and SSIM) in SNU-FILM-arb~\cite{guo2024generalizable}, Vimeo90K-triplet~\cite{xue2019video}, SNU-FILM~\cite{choi2020channel}, and XTest-single~\cite{sim2021xvfi}, and \(\times8\) interpolation on XTest dataset are provided.
Also, in \cref{subsec:user}, we provided the results of the user study we conducted for interpolated videos from various VFI methods.
Lastly, in \cref{subsec:supple_qual}, we provided additional qualitative comparisons on SNU-FILM-arb~\cite{guo2024generalizable} datasets.

\begin{figure*}[ht]
    \centering
    \includegraphics[width=0.8\linewidth]{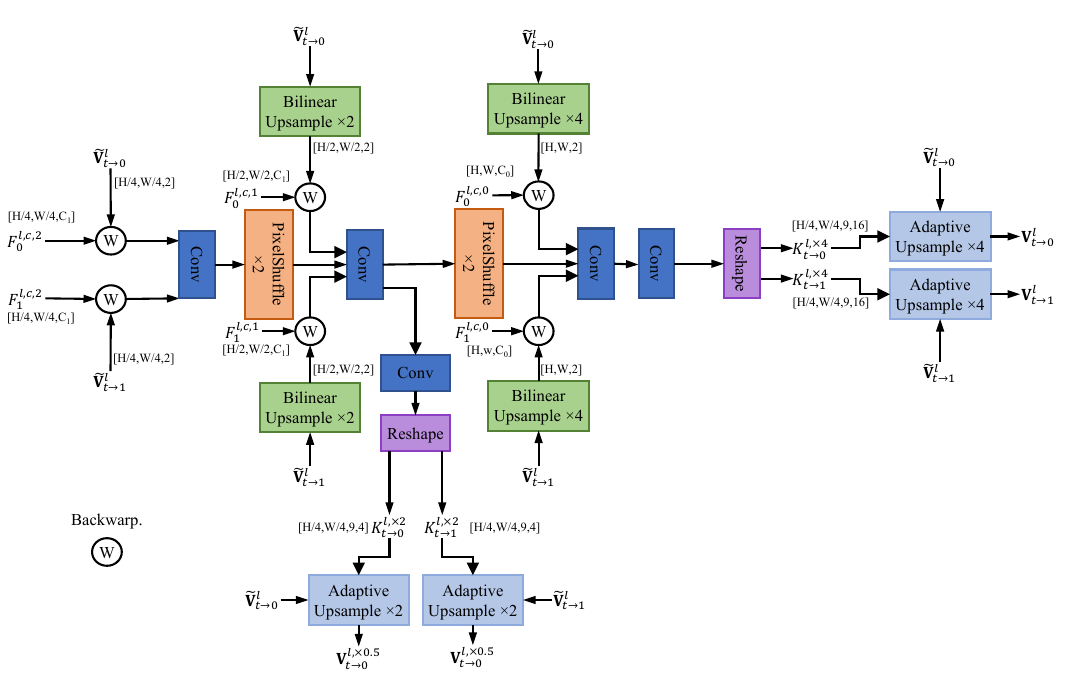}
    \caption{Detailed architecture of our Content-Aware Upsampling Network (CAUN).}
    \label{fig:caun}
\end{figure*}

\section{Additional Details}
\label{sec:detail}
\newtheorem{theorem}{Theorem}
\subsection{Structure of Content-Aware Upsampling Network (CAUN)}
\cref{fig:caun} depicts the detailed architecture of our proposed Content-Aware Upsampling Network, CAUN (\cref{subsec:caun}).
CAUN is designed to construct adaptive upsampling kernels that upsample flows while preserving high-frequency details, especially sharp boundaries and small objects.
For this, CAUN effectively utilizes and integrates multi-scale features.
Context features \(F_0^{l,c}\) and \(F_1^{l,c}\) are consists of multi-scale features \((F_0^{l,c,0}, F_0^{l,c,1}, F_0^{l,c,2})\) and \((F_1^{l,c,0}, F_1^{l,c,1}, F_1^{l,c,2})\), respectively, where \(F_i^{l,c,j}\)  is \(H/2^{j} \times W/2^{j}\)-sized context feature map of \(I_i^l\) for \(i\in\{0,1\}\) and \(j\in\{0,1,2\}\).
Note that \(F_0^{l,c,2}\) and \(F_1^{l,c,2}\) are of the same spatial sizes as \(\tilde{\textbf{V}}_{t\rightarrow0}^l\) and \(\tilde{\textbf{V}}_{t\rightarrow1}^l\).
So, the context features \(F_0^{l,c,2}\) and \(F_1^{l,c,2}\) can be directly aligned to target time \(t\) by warping via \(\tilde{\textbf{V}}_{t\rightarrow0}^l\) and \(\tilde{\textbf{V}}_{t\rightarrow1}^l\), respectively.
However, to warp \(F_0^{l,c,1}\) and \(F_1^{l,c,1}\), the two flows \(\tilde{\textbf{V}}_{t\rightarrow0}^l\) and \(\tilde{\textbf{V}}_{t\rightarrow1}^l\) must be bilinearly upsampled by a factor of 2 and their magnitudes are scaled by a factor of 2 to match with the spatial size of the features \(F_0^{l,c,1}\) and \(F_1^{l,c,1}\).
In this sense, \(\tilde{\textbf{V}}_{t\rightarrow0}^l\) and \(\tilde{\textbf{V}}_{t\rightarrow1}^l\) are further upsampled by a factor of 4, and their magnitudes are scaled by a factor of 4 to warp the features \(F_0^{l,c,0}\) and \(F_1^{l,c,0}\).
Then, the warped features are concatenated and further passed through several convolution layers and PixelShuffle layers to integrate multi-scale features.
Finally, adaptive kernels \(K_{t\rightarrow0}^{l,\times2}\), \(K_{t\rightarrow1}^{l,\times2}\), \(K_{t\rightarrow0}^{l,\times4}\), and \(K_{t\rightarrow0}^{l,\times4}\) are obtained for input with the integrated multi-scale features, where \(K_{t\rightarrow0}^{l,\times2}, K_{t\rightarrow1}^{l,\times2} \in \mathbb{R}^{\frac{H}{4} \times \frac{W}{4} \times 9 \times 4}\) and \(K_{t\rightarrow0}^{l,\times4}, K_{t\rightarrow1}^{l,\times4} \in \mathbb{R}^{\frac{H}{4} \times \frac{W}{4} \times 9 \times 16}\).
\(K_{t\rightarrow0}^{l,\times2}\) and \(K_{t\rightarrow1}^{l,\times2}\) are pixel-wise convolved with \(3\times3\) neighboring pixels of \(\tilde{\textbf{V}}_{t\rightarrow0}^l\) and \(\tilde{\textbf{V}}_{t\rightarrow1}^l\), respectively, to adaptively upsample \(\tilde{\textbf{V}}_{t\rightarrow0}^l\) and \(\tilde{\textbf{V}}_{t\rightarrow1}^l\) by a factor of 2, thus yielding \({\textbf{V}}_{t\rightarrow0}^{l,\times0.5}\) and \({\textbf{V}}_{t\rightarrow1}^{l,\times0.5}\).
Similarly, \(K_{t\rightarrow0}^{l,\times4}\) and \(K_{t\rightarrow0}^{l,\times4}\) are pixel-wise convolved with \(3\times3\) neighboring pixels of \(\tilde{\textbf{V}}_{t\rightarrow0}^l\) and \(\tilde{\textbf{V}}_{t\rightarrow1}^l\), then yielding \({\textbf{V}}_{t\rightarrow0}^{l}\) and \({\textbf{V}}_{t\rightarrow1}^{l}\), respectively, where \({\textbf{V}}_{t\rightarrow0}^{l}\) and \({\textbf{V}}_{t\rightarrow1}^{l}\) are of the same sizes as those of the source images at \(l\)-th level, \(I_0^l\) and \(I_1^l\).
\({\textbf{V}}_{t\rightarrow0}^{l,\times0.5}\), \({\textbf{V}}_{t\rightarrow1}^{l,\times0.5}\), \({\textbf{V}}_{t\rightarrow0}^{l}\), and \({\textbf{V}}_{t\rightarrow1}^{l}\) are further utilized in Synthesis Network (SN) to warp the source images and their context features with more precise flows.
\subsection{Structure of Synthesis Network (SN)}
\begin{figure}[!htbp]
    \centering
    \includegraphics[width=\linewidth]{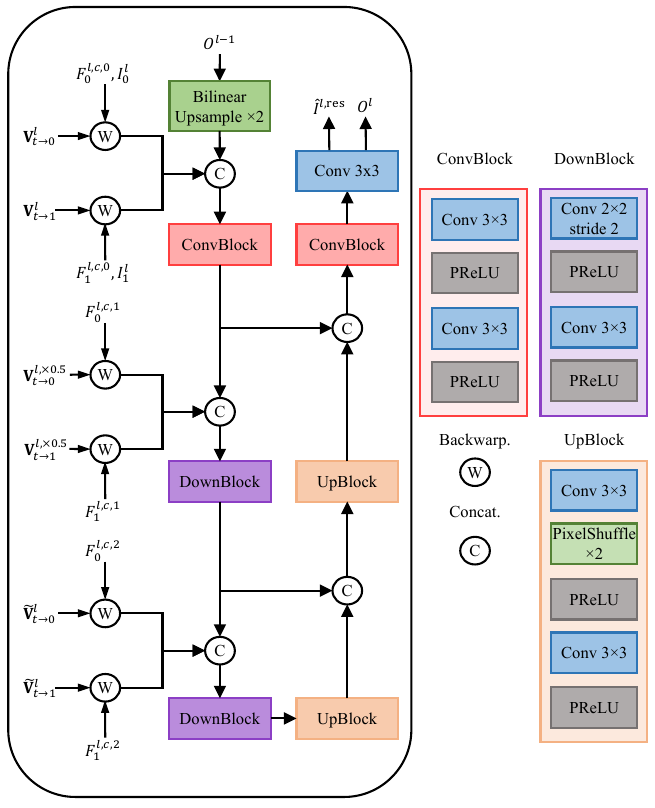}
    \caption{Detailed architecture of our Synthesis Network (SN).}
    \label{fig:sn}
\end{figure}

We employed a simple U-Net~\cite{ronneberger2015u} for our Synthesis Network (SN) as depicted in \cref{fig:sn}.
The multi-scale flows from CAUN, which include \((\tilde{\textbf{V}}_{t\rightarrow0}^{l}, \tilde{\textbf{V}}_{t\rightarrow1}^{l})\), \(({\textbf{V}}_{t\rightarrow1}^{0, \times0.5}, {\textbf{V}}_{t\rightarrow1}^{l, \times 0.5})\) and \(({\textbf{V}}_{t\rightarrow0}^{l}, {\textbf{V}}_{t\rightarrow1}^{l})\), are used to warp the multi-scale context features \((F_0^{l,c,2}, F_1^{l,c,2})\), \((F_0^{l,c,1}, F_1^{l,c,1})\), and \((F_0^{l,c,0}, F_1^{l,c,0})\).
As depicted in \cref{fig:sn}, the warped multi-scale context features are passed through the U-Net, finally yielding a blending mask \(O^l\) and a residual image \(\hat{I}^{l,\text{res}}_t\) at \(l\)-th level.
The resulting \(\hat{I}_t^{l, \text{res}}\) and \(O^{l}\) from the U-Net are employed to construct final interpolation result at \(l\)-th level, \(\hat{I}_t^l\), as follow:
\begin{equation}
\begin{split}
    \hat{I}_{t}^{l} = &\text{ bw}(I_0^l, {\textbf{V}}_{t\rightarrow0}^{l}) * \sigma(O^{l}) + \\
    &\text{ bw}(I_1^l, {\textbf{V}}_{t\rightarrow1}^{l}) * (1 - \sigma(O^{l})) + \hat{I}_t^{l, \text{res}},
\end{split}
\end{equation}
where \(\text{bw}(\cdot, \cdot)\) is a backward warping function and \(\sigma(\cdot)\) is a sigmoid function.

\subsection{Loss Functions}

Our training objectives consist of student loss \(\mathcal{L}_{\mathcal{P}_{\mathcal{S}}}\), and teacher loss \(\mathcal{L}_{\mathcal{P}_{\mathcal{T}}}\). The teacher loss will be discussed first, and followed by the student loss. To supervise photometric reconstruction of the teacher process, the charbonnier loss~\cite{charbonnier1994two} \(\mathcal{L}_{\text{char}}\) and the census loss~\cite{meister2018unflow} \(\mathcal{L}_{\text{css}}\) are used as follow:
\begin{equation}
    \begin{split}
        &\mathcal{L}_{\text{char},\mathcal{P}_{\mathcal{T}}}^l = \lambda_{\text{char},\mathcal{P}_{\mathcal{T}}}\mathcal{L}_{\text{char}}(\hat{I}^{l,\mathcal{P}_{\mathcal{T}}}, I^{l}), \\
        &\mathcal{L}_{\text{css},\mathcal{P}_{\mathcal{T}}}^l = \lambda_{\text{css},\mathcal{P}_{\mathcal{T}}}\mathcal{L}_{\text{css}}(\hat{I}^{l,\mathcal{P}_{\mathcal{T}}}, I^{l}), \\
        &\mathcal{L}_{\text{pho},\mathcal{P}_{\mathcal{T}}}^l = \mathcal{L}_{\text{char},\mathcal{P}_{\mathcal{T}}}^l + \mathcal{L}_{\text{css},\mathcal{P}_{\mathcal{T}}}^l,
    \end{split}
\end{equation}
where \(l\) is the current pyramid level, \(\lambda_{\text{char},\mathcal{P}_{\mathcal{T}}}\) and \(\lambda_{\text{css},\mathcal{P}_{\mathcal{T}}}\) are weights for each loss.
Furthermore, the first-order edge-aware smoothness loss~\cite{jonschkowski2020matters} \(\mathcal{L}_{\text{s1}}\) is used to ensure smooth teacher flows excluding object boundaries, and the regularization loss \(\mathcal{L}_{\text{reg}}\) is utilized to force \(V_{t\rightarrow t|0t}^{l, \mathcal{P}_{\mathcal{T}}}\) and \(V_{t\rightarrow t|t1}^{l, \mathcal{P}_{\mathcal{T}}}\) to be uniform vector fields of all zeros, where the input BiM of teacher process (\cref{eq:teacher_bim0}, \cref{eq:teacher_bim1}) is used to guide the two flows to be zero flows:
\begin{equation}
    \begin{split}
        &\mathcal{L}_{\text{s1}}^l = \lambda_{\text{s1}}(\mathcal{L}_{\text{s1}}(V_{t\rightarrow0}^{l, \mathcal{P}_{\mathcal{T}}}) + \mathcal{L}_{\text{s1}}(V_{t\rightarrow1}^{l, \mathcal{P}_{\mathcal{T}}})), \\
        &\mathcal{L}_\text{reg}^l = \lambda_\text{reg}(\mathcal{L}_2(V_{t\rightarrow t|0t}^{l, \mathcal{P}_{\mathcal{T}}}) + \mathcal{L}_2(V_{t\rightarrow t|t1}^{l, \mathcal{P}_{\mathcal{T}}})), \\
        &\mathcal{L}_{\text{flo},\mathcal{P}_{\mathcal{T}}}^l = \mathcal{L}_{\text{s1}}^l + \mathcal{L}_\text{reg}^l,
    \end{split}
\end{equation}
where \(\lambda_{\text{s1}}\) and \(\lambda_\text{reg}\) are weights for each loss.

\begin{table*}[ht]
\centering

{\small
\setlength\tabcolsep{3pt}
\begin{tabular}{l | c c | c c | c c | c c c c c}
\multirow{3}{*}{Methods} & \multicolumn{6}{c}{SNU-FILM-arb} & \multicolumn{5}{c}{XTest} \\
 & \multicolumn{2}{c}{medium} & \multicolumn{2}{c}{hard} & \multicolumn{2}{c}{extreme} &  \multicolumn{5}{c}{\(\times\)8}  \\
& psnr & ssim & psnr & ssim & psnr & ssim & psnr & ssim & lpips & stlpips & niqe \\\hline\hline
RIFE~\cite{huang2022real} & 36.31 & \underline{0.981} & 31.86 & \underline{0.952} & 27.20 & \underline{0.895} & 30.58 & 0.904 & 0.153 & 0.114 & 7.393 \\
IFRNet~\cite{kong2022ifrnet} & 34.82 & 0.976 & 31.11 & 0.947 & 26.29 & 0.882 & 26.36 & 0.826 & 0.198 & 0.147 & \textbf{5.842} \\
M2M-PWC~\cite{hu2022many} & 36.54 & \textbf{0.982} & \underline{31.92} & 0.951 & 27.13 & 0.892 & \underline{30.81} & \underline{0.912} & \underline{0.080} & \underline{0.047} & 6.521 \\
AMT-S~\cite{li2023amt} & 34.42 & 0.974 & 30.98 & 0.947 & 26.42 & 0.887 & 28.16 & 0.873 & 0.187 & 0.134 & 7.082 \\
UPRNet~\cite{jin2023unified} & \underline{36.70} & \textbf{0.982} & 31.9 & 0.951 & 27.08 & 0.893 & 30.50 & 0.905 & 0.093 & 0.058 & \underline{6.148} \\
EMA-VFI~\cite{zhou2023exploring} & \textbf{36.85} & \textbf{0.982} & \textbf{32.7} & \textbf{0.957} & \textbf{28.15} & \textbf{0.906} & \textbf{31.36} & \textbf{0.914} & 0.165 & 0.130 & 7.77 \\
\hline
RIFE[D,R]~\cite{zhong2023clearer} & 36.17 & 0.981 & 31.59 & 0.949 & 27.05 & 0.891 & 26.93 & 0.839 & 0.232 & 0.169 & 6.477 \\
IFRNet[D,R]~\cite{zhong2023clearer} & 35.92 & 0.981 & 31.18 & 0.947 & 26.54 & 0.886 & 28.76 & 0.891 & 0.147 & 0.096 & 7.054 \\
AMT-S[D,R]~\cite{zhong2023clearer} & 34.78 & 0.978 & 30.48 & 0.944 & 26.15 & 0.886 & 29.27 & 0.886 & 0.098 & 0.055 & 6.409 \\
EMA-VFI[D,R]~\cite{zhong2023clearer} & 35.75 & 0.980 & 31.02 & 0.946 & 26.37 & 0.885 & 25.75 & 0.833 & 0.258 & 0.192 & 6.928 \\
\hline
ours & 36.57 & \textbf{0.982} & \underline{31.92} & 0.949 & \underline{27.22} & 0.891 & 30.80 & \textbf{0.914} & \textbf{0.068} & \textbf{0.045} & 6.449 \\

\end{tabular}
}
\caption{Additional quantitative comparisons on arbitrary time interpolation datasets.}
\label{tab:supple_arb}
\end{table*}
The photometric loss for the student process is constructed in the same manner as the teacher process, which is given by:
\begin{equation}
    \begin{split}
        &\mathcal{L}_{\text{char},\mathcal{P}_{\mathcal{S}}}^l = \lambda_{\text{char},\mathcal{P}_{\mathcal{S}}}\mathcal{L}_{\text{char}}(\hat{I}^{l,\mathcal{P}_{\mathcal{S}}}, I^{l}), \\
        &\mathcal{L}_{\text{css},\mathcal{P}_{\mathcal{S}}}^l = \lambda_{\text{css},\mathcal{P}_{\mathcal{S}}}\mathcal{L}_{\text{css}}(\hat{I}^{l,\mathcal{P}_{\mathcal{S}}}, I^{l}), \\
        &\mathcal{L}_{\text{pho},\mathcal{P}_{\mathcal{S}}}^l = \mathcal{L}_{\text{char},\mathcal{P}_{\mathcal{S}}}^l + \mathcal{L}_{\text{css},\mathcal{P}_{\mathcal{S}}}^l,
    \end{split}
\end{equation}
where \(\lambda_{\text{char},\mathcal{P}_{\mathcal{S}}}\) and \(\lambda_{\text{css},\mathcal{P}_{\mathcal{S}}}\) are the weights for their respective losses. The flows of the student process will be supervised by a flow distillation loss that enforces the flows of the student process to get closer to those of the teacher process, which is given by:
\begin{equation}
    \begin{split}
        \mathcal{L}_{\text{flo},\mathcal{P}_{\mathcal{S}}}^l = \lambda_{\text{distill}}(\mathcal{L}_2(V_{t\rightarrow 0}^{l,\mathcal{P}_{\mathcal{S}}} - \text{sg}(V_{t\rightarrow 0}^{l,\mathcal{P}_{\mathcal{T}}} )) \\
        + \mathcal{L}_2(V_{t\rightarrow 1}^{l, \mathcal{P}_{\mathcal{S}}} - \text{sg}(V_{t\rightarrow 1}^{\mathcal{P}_{\mathcal{T}}}))),
    \end{split}
\end{equation}
where \(\lambda_\text{distill}\) is a weighting factor, and \(\text{sg}(\cdot)\) is a stop gradient function that is used to force the gradients to be only activated for the student process. The overall loss for our BiM-VFI with KDVCF is defined as:
\begin{equation}
\begin{split}
    \mathcal{L} = \sum_{l=0}^{L-1} &\gamma_\text{pho}^{l}(\mathcal{L}_{\text{pho},\mathcal{P}_{\mathcal{T}}}^l + \mathcal{L}_{\text{pho}, \mathcal{P}_{\mathcal{S}}}) + \\
    &\gamma_\text{flo}^{l}(\mathcal{L}_{\text{flo},\mathcal{P}_{\mathcal{T}}} + \mathcal{L}_{\text{flo},\mathcal{P}_{\mathcal{S}}}),
\end{split}
\end{equation}
where \(L\) is the total number of pyramid levels used in training, and \(\gamma_\text{pho}\), and \(\gamma_\text{flo}\) are exponential weights for the photometric loss and the flow-centric loss, respectively, which are employed to weigh more supervision on larger-sized image resolutions.

\begin{table*}[ht]
\centering

{\small
\setlength\tabcolsep{3pt}
\begin{tabular}{l | c c | c c | c c | c c | c c | c c | c c}
\multirow{3}{*}{Methods} & \multicolumn{2}{c}{\multirow{2}{*}{Vimeo 90K-triplet}} & \multicolumn{8}{c}{SNU-FILM} & \multicolumn{2}{c}{{XTest}} & \multicolumn{2}{c}{Complexity} \\
 &  &  & \multicolumn{2}{c}{easy} & \multicolumn{2}{c}{medium} & \multicolumn{2}{c}{hard} & \multicolumn{2}{c}{extreme} & \multicolumn{2}{c}{{single}} & FLOPs & Params \\
 & psnr & ssim & psnr & ssim & psnr & ssim & psnr & ssim & psnr & ssim & psnr & ssim & (T) & (M) \\\hline\hline
AMT-G~\cite{li2023amt} & \underline{36.53} & \textbf{0.982} & 39.88 & \textbf{0.991} & \underline{36.12} & \textbf{0.981} & 30.78 & \textbf{0.981} & 25.43 & \underline{0.865} & \textbf{30.34} & \textbf{0.904} & 2.07 & 30.6 \\
M2M-PWC~\cite{hu2022many} & 35.49 & 0.978 & 39.66 & \textbf{0.991} & 35.74 & 0.980 & 30.32 & \underline{0.980} & 25.07 & 0.863 & \textbf{30.81} & 0.900 & \underline{0.26} & 7.6 \\
UPRNet~\cite{jin2023unified} & 36.42 & \textbf{0.982} & \textbf{40.44} & \textbf{0.991} & \textbf{36.29} & 0.980 & \underline{30.86} & 0.938 & \underline{25.63} & 0.864 & 30.27 & 0.897 & 1.59 & 6.6 \\
RIFE~\cite{huang2022real} & 35.61 & 0.978 & 40.02 & \underline{0.990} & 35.72 & 0.979 & 30.08 & 0.933 & 24.84 & 0.853 & 23.57 & 0.778 & 0.20 & 9.8 \\
XVFI~\cite{sim2021xvfi} & 33.99 & 0.968 & 38.37 & 0.987 & 34.42 & 0.973 & 29.52 & 0.928 & 24.88 & 0.854 & 28.96 & 0.887 & \textbf{0.21} & \textbf{5.7} \\
IFRNet~\cite{kong2022ifrnet} & 36.16 & \underline{0.980} & \underline{40.10} & \textbf{0.991} & \underline{36.12} & 0.978 & 30.63 & 0.936 & 25.27 & 0.861 & 27.53 & 0.847 & 0.79 & 19.7 \\
EMA-VFI~\cite{zhou2023exploring} & \textbf{36.64} & \textbf{0.982} & 39.98 & \textbf{0.991} & 36.09 & \underline{0.980} & \textbf{30.94} & 0.939 & \textbf{25.69} & \textbf{0.866} & 29.89 & 0.896 & 0.91 & 66.0 \\\hline
Ours & 35.01 & 0.977 & 40.09 & \underline{0.990} & 35.89 & 0.979 & 30.54 & 0.935 & 25.33 & 0.860 & 29.90 & \underline{0.901} & 0.91 & \underline{6.0} \\

\end{tabular}
}
\caption{Additional quantitative comparisons on fixed time interpolation datasets and the complexity of SOTA models.}
\label{tab:supple_fixed}
\end{table*}
\subsection{Implementation details}
We trained our BiM-VFI with a training split of Vimeo90k septuplet datasets~\cite{xue2019video}.
We randomly crop the images to a resolution of 256 \(\times\) 256, flip horizontally and vertically, rotate, reverse temporally, and permute the color channels to augment the training data. We set the batch size to 32, and train the model for 400 epochs with an initial learning rate of 1 \(\times 10^{-4}\).
We gradually decay the initial learning rate using a Cosine annealing scheduler~\cite{loshchilov2016sgdr} and optimize our model using the AdamW optimizer~\cite{loshchilov2017decoupled}.
Also, because the architecture of our BiM-VFI is based on a recurrent pyramid architecture, we employed resolution-aware adaptation for the pyramid hierarchy depth proposed by Jin \etal~\cite{jin2023unified}.
For training on Vimeo90K, we used 3 pyramid levels, while 5 pyramid levels are used for SNU-FILM~\cite{choi2020channel} and SNU-FILM-arb~\cite{guo2024generalizable}, and 7 pyramid levels for Xtest~\cite{sim2021xvfi}.
As mentioned, our proposed KDVCF computes the BiM during training, and for inference time, the BiM is represented according to \cref{eq:unifrom_bim} corresponding to a uniform motion scenario.

\begin{figure}[t]
\centering
    \begin{subfigure}[b]{\linewidth}
    \centering
        \includegraphics{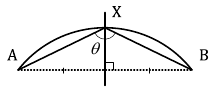}
        \caption{Unique intersection of loci in case of \(k=1\)}
        \label{subfig:bisector}
        
    \end{subfigure}
    \begin{subfigure}[b]{\linewidth}
        \centering
        \includegraphics{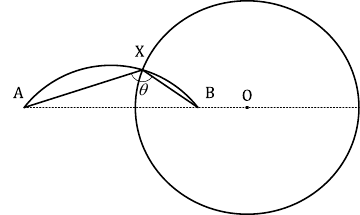}
        \caption{Unique intersection of loci in case of \(k\neq1\)}
        \label{subfig:circle}
        
    \end{subfigure}
    \caption{Visualization of intersection by two loci.}
    \label{fig:bim_proof}
\end{figure}
\subsection{Distinct Description of BiM}
As discussed in \cref{subsec:motion field} of the main paper, we proposed BiM as a distinct motion descriptor for non-uniform motions, including accelerations, decelerations, and changing directions.
To ensure the distinct descriptive power of BiM, we provide a mathematical analysis of how our BiM can explain the position of the intermediate pixel between given two corresponding pixels.
\begin{theorem}
Let \(A\) and \(B\) be two fixed points, and let \(k\) be a positive real number.
The point \(X\) such that the distance ratio \(\frac{\overline{AX}}{\overline{BX}} = k\) and the angle \(\angle AXB = \theta\) is unique.
\end{theorem}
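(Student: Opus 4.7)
The plan is to reduce the two constraints to classical loci and show they intersect at a single point, splitting the argument according to whether $k=1$ or $k\neq 1$; these two cases correspond to the two subfigures of \cref{fig:bim_proof}.

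First, I would identify each constraint with a well-known locus. The equation $\overline{AX}/\overline{BX}=k$ describes the perpendicular bisector of $AB$ when $k=1$ and the Apollonius circle of $A,B$ with ratio $k$ when $k\neq 1$. The equation $\angle AXB = \theta$, by the inscribed-angle theorem, describes a circular arc through $A$ and $B$ lying on one side of the line $AB$; because the angle $\phi$ in \cref{eq:BiM} is a signed quantity (the directed difference of the two flow angles), the sign of $\theta$ singles out exactly one such arc rather than the usual symmetric pair.

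The cleanest way to conclude uniqueness is via the law of cosines at vertex $X$ of triangle $ABX$. Substituting $\overline{AX} = k\,\overline{BX}$ into
\begin{equation*}
\overline{AB}^2 = \overline{AX}^2 + \overline{BX}^2 - 2\,\overline{AX}\,\overline{BX}\cos\theta
\end{equation*}
gives $\overline{AB}^2 = \overline{BX}^2\bigl(k^2 + 1 - 2k\cos\theta\bigr)$, from which $\overline{BX}$, and hence $\overline{AX} = k\,\overline{BX}$, is uniquely determined as a positive real. This pins $X$ to the intersection of two circles centered at $A$ and $B$ with fixed radii, a set of at most two points mirrored across line $AB$; the signed angle $\theta$ then selects the correct side of $AB$, yielding a single point. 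Equivalently, one can argue geometrically that the Apollonius circle (or the perpendicular bisector, when $k=1$) meets the directed arc in exactly one point, which is the content of \cref{fig:bim_proof}.

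The main obstacle I anticipate is disposing of the degenerate configurations cleanly: the quantity $k^2+1-2k\cos\theta \geq (k-1)^2$ vanishes only when $k=1$ and $\cos\theta=1$, which corresponds to zero relative motion; moreover, the side-of-line argument collapses when $\theta \in \{0,\pi\}$ forces $X$ onto the line $AB$ itself. In the BiM setting these cases correspond to vanishing or strictly collinear flows and can be handled as boundary cases, but they should be called out explicitly so that the uniqueness assertion is rigorous outside this measure-zero locus.
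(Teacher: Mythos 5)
Your proof is correct but hinges on a different decisive step than the paper's. The paper argues purely synthetically: the angle condition traces an inscribed-angle arc through $A$ and $B$, the ratio condition traces the perpendicular bisector of $\overline{AB}$ (when $k=1$) or the Apollonian circle of $A,B$ with ratio $k$ (when $k\neq 1$), and, because the latter locus separates $A$ from $B$, it crosses the arc exactly once. You instead go through the law of cosines at $X$: substituting $\overline{AX}=k\,\overline{BX}$ gives $\overline{AB}^2=\overline{BX}^2\bigl(k^2+1-2k\cos\theta\bigr)$, which pins down $\overline{BX}$ and hence $\overline{AX}$ outright; the circles of those radii about $A$ and $B$ then meet in a mirror pair across line $AB$, and the orientation of $\theta$ selects one of the two. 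The routes are logically equivalent, but yours adds rigor at two places the paper glosses over: it makes explicit that $\theta$ must be read as a \emph{signed} angle --- otherwise the inscribed-angle locus is a pair of arcs symmetric across $AB$ and uniqueness only holds up to reflection, a subtlety the paper's proof never states --- and it surfaces the boundary configurations ($\theta\in\{0,\pi\}$, where $X$ becomes collinear with $A,B$ and, at $k=1,\ \theta=0$, the quadratic even vanishes). What the synthetic route buys in return is brevity and a direct match with \cref{fig:bim_proof}; what yours buys is a self-contained uniqueness certificate that does not rely on the reader already accepting that the Apollonian circle and the arc meet exactly once.
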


\begin{proof}

We start by describing the locus of points \(X'\) where \(\angle AX'B = \theta\).
This locus forms an arc \(\overset{\frown}{AB}\) where any point \(X'\) on the arc \(\overset{\frown}{AB}\) satisfying \(\angle AX'B = \theta\). 

The locus of points \(X''\) where \(\frac{\overline{AX''}}{\overline{BX''}} = k\) varies inshape depending on the value of \(k\). 

I) If \(k=1\) (\cref{subfig:bisector}), this locus forms a perpendicular bisector of \(\overline{AB}\).
In this case, the intersection of the arc \(\overset{\frown}{AB}\) and the perpendicular bisector of \(\overline{AB}\) is unique, thus the point satisfying the distance ratio \(\frac{\overline{AX}}{\overline{BX}} = k\) and the angle \(\angle AXB = \theta\) is unique. 

II) If \(k \neq 1\) (\cref{subfig:circle}), this locus forms an Apollonian circle~\cite{mittal2012making}.
In this case, if \(k>1\), point \(A\) is outside the circle, and point \(B\) is inside the circle. Conversely, if \(k<1\), point \(B\) is outside the circle, and point \(A\) is inside the circle.
In any case, the resulting circle has only one intersection with the arc \(\overset{\frown}{AB}\), thus the point satisfying the distance ratio \(\frac{\overline{AX}}{\overline{BX}} = k\) and the angle \(\angle AXB = \theta\) is unique. 

By I) and II), for given two fixed points \(A\) and \(B\), a positive real number \(k\), it is concluded that the point \(X\) satisfying the distance ratio \(\frac{\overline{AX}}{\overline{BX}} = k\) and the angle \(\angle AXB = \theta\) is unique.
\end{proof}

\section{Additional Experimental Results}
\label{sec:additional_results}
\subsection{Quantitative Results}
\label{subsec:supple_quan}
We provided pixel-centric metrics (PSNR and SSIM) measured on SNU-FILM-arb~\cite{guo2024generalizable} datasets and additional arbitrary time interpolation on XTest~\cite{sim2021xvfi} to interpolate \(\times8\) frames, which are tabulated in \cref{tab:supple_arb}.
As discussed in \cref{subsec:quan} of the main paper, while our BiM-VFI underperforms in pixel-centric metrics, it consistently outperforms the other SOTA methods on XTest~\cite{sim2021xvfi} \(\times8\) interpolation in terms of perceptual metrics, such as LPIPS and STLPIPS.

In \cref{tab:supple_fixed}, we also provided pixel-centric metrics measured on fixed-time datasets (Vimeo 90K-triplet~\cite{xue2019video}, SNU-FILM~\cite{choi2020channel}, and XTest~\cite{sim2021xvfi} single) and complexity comparisons between other SOTA methods.

\subsection{Computational complexity}
We additionally provide below \#'s of parameters and FLOPs on each component for 256\(\times\)256-sized images.

\begin{table}[ht]
\centering
{\fontsize{7pt}{8pt}\selectfont
\setlength\tabcolsep{3pt}
\begin{tabular}{l |  c c  c  c  c | c}
\hline
 & MFE & CFE & BiMFN & CAUN & SN & Total \\
\hline
\#Params(M) & 0.58 & 0.58 & 3.41 & 0.61 & 1.7 & 6.88 \\
\hline
FLOPs(G) & 12.02 & 12.02 & 18.81 & 11 & 24.64 & 78.49 \\

\hline
\end{tabular}
}
\end{table}

We measured FLOPs for interpolating 1280\(\times\)720-sized source images and the total parameters used in the methods.
As shown in \cref{tab:supple_fixed}, our BiM-VFI effectively reduced the number of parameters by employing a recurrent pyramid architecture, while having moderate computational complexity among the other SOTA methods in terms of FLOPs.

\begin{table}[ht]
\centering

{\fontsize{7pt}{8pt}\selectfont
\setlength\tabcolsep{3pt}
\begin{tabular}{l | c  c  c  c | c}
\hline
 & GIMM-VFI-R & EMA-VFI & UPR & AMT & Ours \\
\hline
\#Params(M) & 19.73 & 65.66 & 6.56 & 30.64 & \textbf{6.88} \\
\hline
FLOPs(G) & 9187 & 1714 & 1228 & 2395 & \textbf{1177} \\
\hline
Runtime(ms) & 494 & 104 & \textbf{53} & 183 & 151 \\
\hline
\end{tabular}
}

\end{table}

\subsection{Additional ablation study}
We provide below the detection performance of small objects and object boundaries with and without CAUN module. As shown, the CAUN can help capture well small toes (top) and detect tight boundaries of the windmill blade (bottom), while failing without it. 

\begin{figure}[ht]
    \centering
    \includegraphics[width=\linewidth]{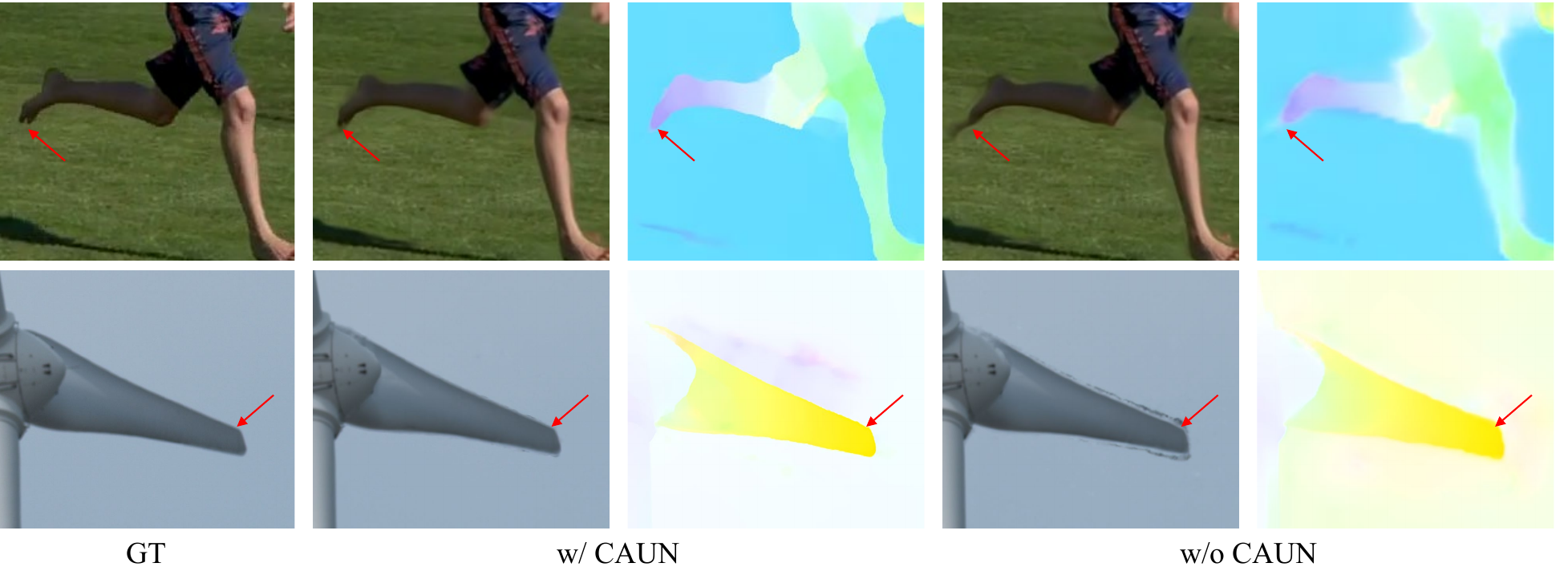}
    \label{fig:caun__ablation}
\end{figure}
\noindent Also, as mentioned in \textit{Suppl.}, while our KDVCF increases the training time from 2.5 to 4 days using 4 A100 GPUs due to the \(\mathcal{P}_{\mathcal{T}}\) process, it does not increase the inference time.

\noindent We also compared the \#'s of parameters, FLOPs, and runtime (measured on an A100 GPU with 1280\(\times\)768-sized images) in the below table. It can be noted that our BiM-VFI has a low number of parameters, showing moderate runtime.

\begin{figure}[t]
    \centering
    \includegraphics[width=0.8\linewidth]{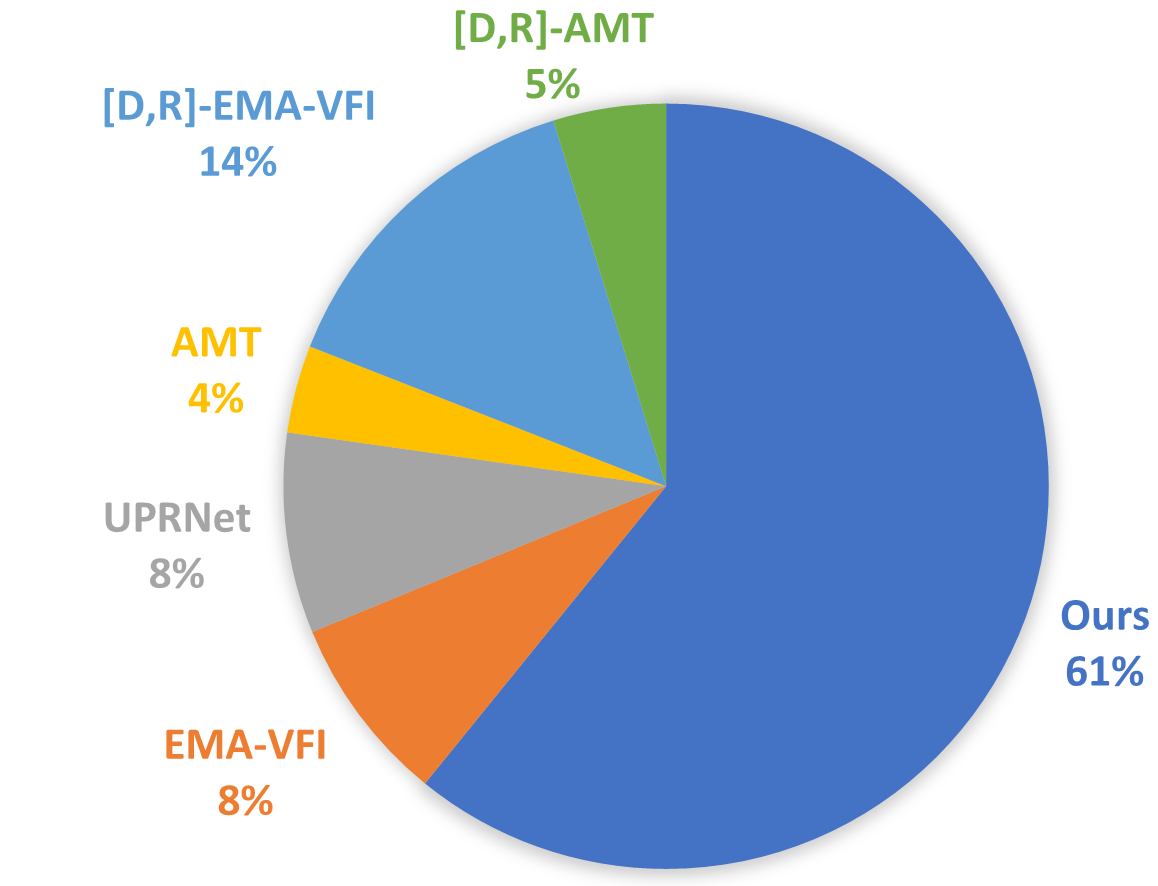}
    \caption{Preference of interpolated videos between our BiM-VFI and the other SOTA models measured by user study.}
    \label{fig:user}
\end{figure}

\subsection{User Study}
\label{subsec:user}
We conducted a user study to show that our BiM-VFI with uniform motion BiM perceptually outperforms other SOTA methods.
21 participants were asked to choose the best-interpolated videos among AMT-S~\cite{li2023amt}, UPRNet~\cite{jin2023unified}, EMA-VFI~\cite{zhou2023exploring}, [D,R]-AMT-S, and [D,R]-EMA-VFI, where [D,R] indicates that distance indexing and iterative reference-based estimation, proposed by Zhong \etal~\cite{zhong2023clearer}, are plugged into the method.
We used 9 test videos for blind subjective tests where the six 8\(\times\)-interpolated videos for the six VFI methods including our BiM-VFI are displayed simultaneously on the same screens for each test video.
In order to remove any subjective bias to specific VFI methods, the six 8\(\times\)-interpolated videos for each test video are randomly ordered and presented to the participants in the blind subjective test.

As shown in \cref{fig:user}, our BiM-VFI dominantly outperforms the other SOTA methods in the subjective tests, by 61\% preference against the other six VFI methods. 

\subsection{Qualitative Results}
\label{subsec:supple_qual}
We provided additional qualitative comparisons with the SOTA methods in SNU-FILM-arb~\cite{guo2024generalizable} extreme datasets.

\section{Limitation}
Our KDVCF requires approximately twice the training time compared to training solely with the student process, as both the teacher and student processes are trained simultaneously.
However, the model trained with KDVCF demonstrated its effectiveness in perceptual metrics compared to models supervised with pre-trained flow models or without flow supervision.
It is also noteworthy that only the student process remains during inference, so the inference runtime is the same as that of models trained without KDVCF.

\clearpage
\begin{figure*}
    \centering
    \includegraphics[width=\linewidth]{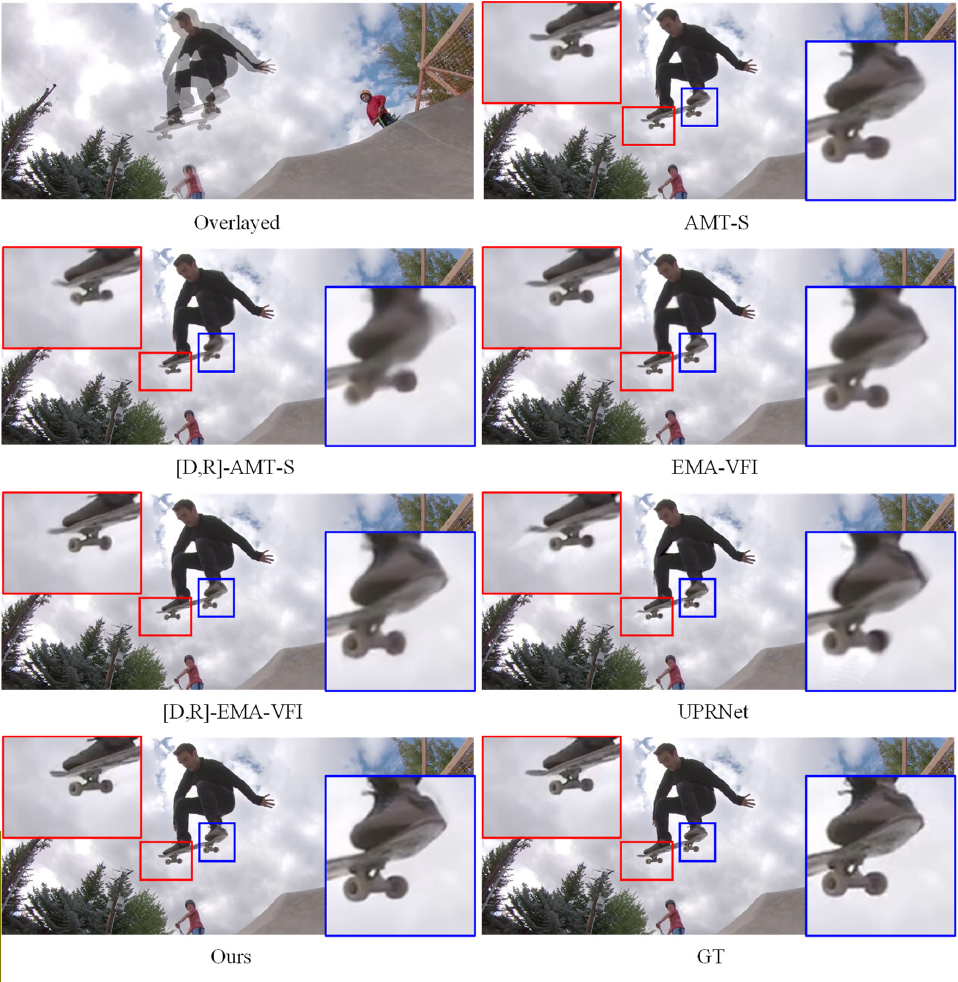}
    \caption{Additional qualitative comparisons on SNU-FILM-arb~\cite{guo2024generalizable} extreme datasets.}
    \label{fig:supple_snufilm1}
\end{figure*}

\clearpage
\begin{figure*}
    \centering
    \includegraphics[width=\linewidth]{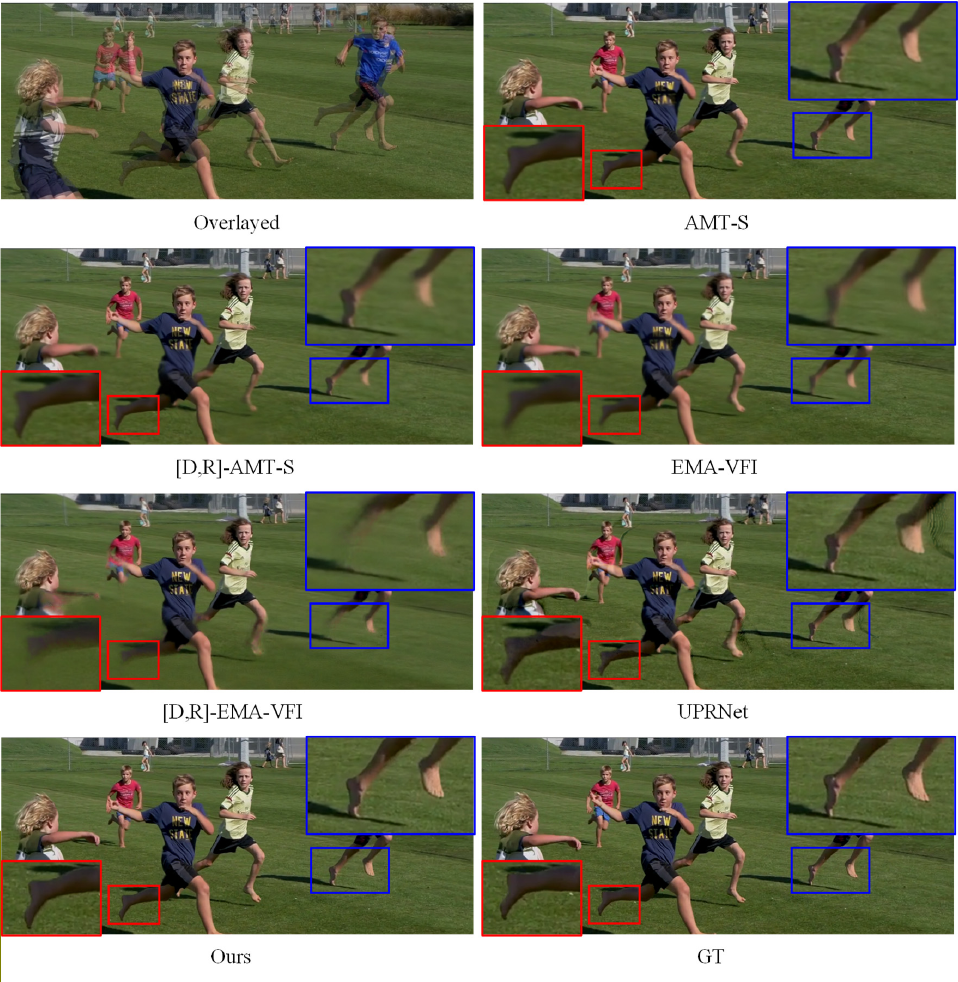}
    \caption{Additional qualitative comparisons on SNU-FILM-arb~\cite{guo2024generalizable} extreme datasets.}
    \label{fig:supple_snufilm2}
\end{figure*}

\clearpage
\begin{figure*}
    \centering
    \includegraphics[width=\linewidth]{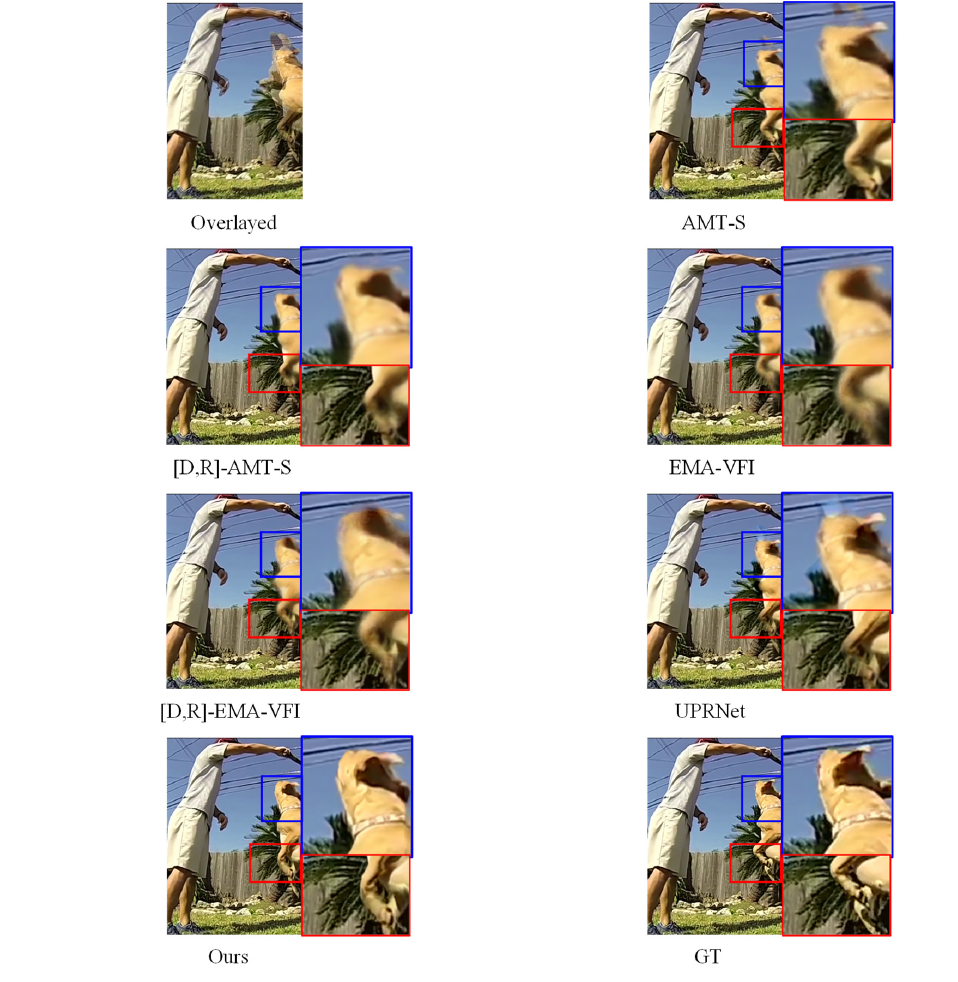}
    \caption{Additional qualitative comparisons on SNU-FILM-arb~\cite{guo2024generalizable} extreme datasets.}
    \label{fig:supple_snufilm3}
\end{figure*}

\clearpage
\begin{figure*}
    \centering
    \includegraphics[width=\linewidth]{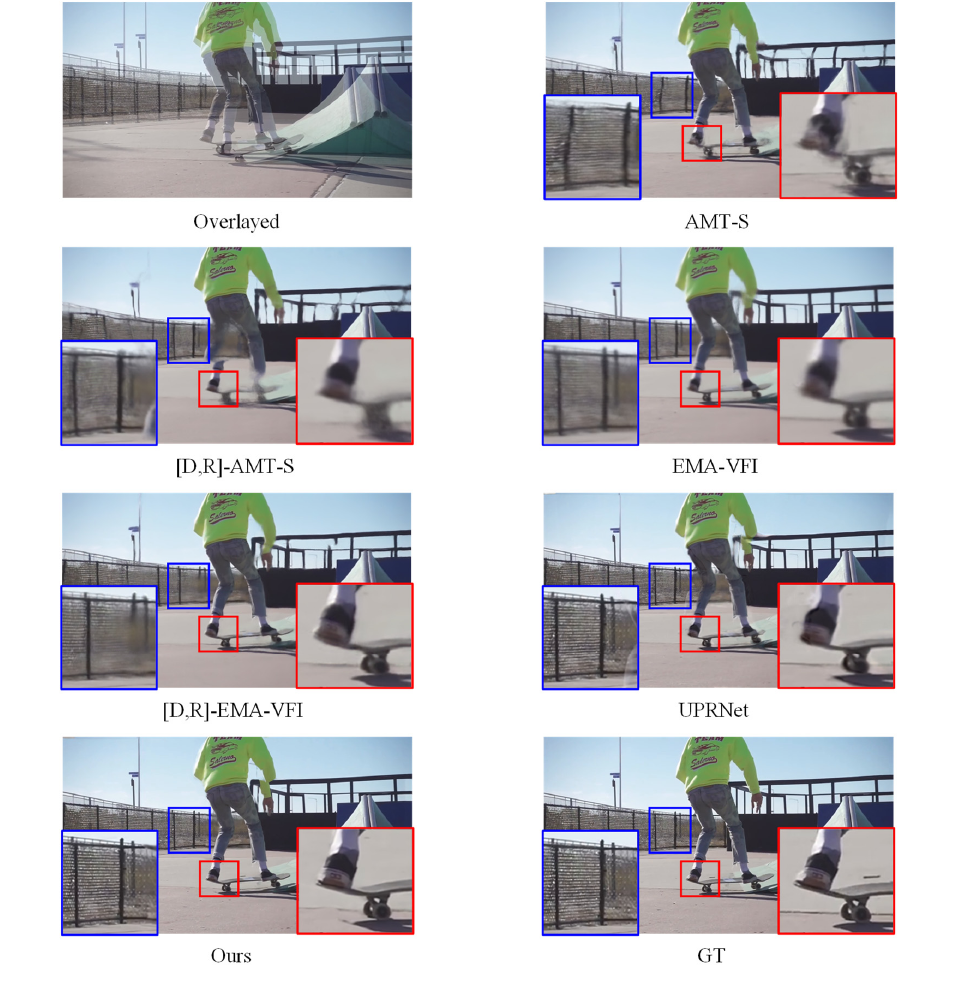}
    \caption{Additional qualitative comparisons on SNU-FILM-arb~\cite{guo2024generalizable} extreme datasets.}
    \label{fig:supple_snufilm4}
\end{figure*}

\clearpage
\end{document}